%% file: main.tex
\documentclass[a4paper]{article}
\usepackage[all]{xy}\usepackage[latin1]{inputenc}        
\usepackage[dvips]{graphics,graphicx}
\usepackage{amsfonts,amssymb,amsmath,color,mathrsfs, amstext}
\usepackage{amsbsy, amsopn, amscd, amsxtra, amsthm,authblk}
\usepackage{enumerate,algorithm,algpseudocode}
\usepackage{fancyhdr}
\usepackage{setspace}
\usepackage{ulem}
\usepackage{bbm}
\usepackage{subcaption}
\usepackage{graphicx}
\usepackage{upref,ulem}
\usepackage{geometry}
\usepackage{float}
\usepackage{yhmath}
\usepackage{booktabs}
\usepackage{graphicx}
\usepackage{multirow}
\usepackage{makecell}
\usepackage[colorlinks,
            linkcolor=red,
            anchorcolor=red,
            citecolor=red,
            urlcolor=black,
            ]{hyperref}

\numberwithin{equation}{section}

\newcommand \footnoteONLYtext[1]
{
	\let \mybackup \thefootnote
	\let \thefootnote \relax
	\footnotetext{#1}
	\let \thefootnote \mybackup
	\let \mybackup \imareallyundefinedcommand
}

\definecolor{DarkBlue}{rgb}{0,0,.8}
\newtheorem{theorem}{Theorem}[section]
\newtheorem{lemma}{Lemma}[section]
\newtheorem{definition}{Definition}[section]
\newtheorem{remark}{Remark}[section]
\newtheorem{proposition}{Proposition}[section]

\numberwithin{equation}{section}

\providecommand{\keywords}[1]
{
  \small	
  \textbf{\textit{Keywords:}} #1
}

\providecommand{\msc}[1]
{
  \small	
  \textbf{\textit{Mathematics Subject Classification:}} #1
}

\begin{document}

\title{A Generative Sampler for distributions with possible discrete parameter based on Reversibility}

\author[1,2]{Lei Li \thanks{leili2010@sjtu.edu.cn} }
\author[1]{Zhen Wang \thanks{wang-zhen@sjtu.edu.cn}}
\author[1]{Lishuo Zhang \thanks{ShawnLi9@sjtu.edu.cn; Corresponding author.}  }
\affil[1]{School of Mathematical Sciences, Shanghai Jiao Tong University, Shanghai, 200240, P.R.China.}
\affil[2]{Institute of Natural Sciences, MOE-LSC, Shanghai Jiao Tong University, Shanghai, 200240, P.R.China.}

\date{}
\maketitle

\input{section/abstract}

\input{section/introduction}
\input{section/preliminary}

\input{section/generative_framework_based_on_reversibility}
\input{section/theory}
\input{section/numerical_experiments}

\input{section/conclusion}


\appendix

\section{The surrogate gradient}\label{app:Nlimitofsurrogate}

In the generative framework, let $G_\theta:\mathbb{R}^d\to\mathbb{R}^d$ is a differentiable transport map from a base distribution $x\sim p_0$ to the target distribution $G_\theta(x)\sim p_\theta$. During training, we generate samples $\{X_i\}^N_{i=1}$ from $p_0$ via $X_i=G_\theta(Z_i)$ with $Z_i\sim p_0$. For each $X_i$, we then generate $Y_i$ by simulating one step of a Markov process with initial state $X_i$ and transition kernel $p(\cdot,\cdot)$. This yields paired samples $\{(X_i,Y_i)\}^N_{i=1}$. In numerical realization, the loss using the samples is given by
\begin{align*}
    \mathcal{L}^N(\theta)
    &=\frac{1}{N^2}\sum^N_{i,j=1}{\left[k((X_i,Y_i),(X_j,Y_j))-k((X_i,Y_i),(Y_j,X_j))\right]}\\
    &\quad-\frac{1}{N^2}\sum^N_{i,j=1}{\left[k((Y_i,X_i),(X_j,Y_j))-k((Y_i,X_i),(Y_j,X_j))\right]}.
\end{align*}
Only taking $\theta$ derivative for $X_i$, we obtain the following
\begin{align*}
    \hat{g}_N(\theta)
    &=\frac{1}{N^2}\sum^N_{i,j=1}{\left[\nabla_{x_1}k((X_i,Y_i),(X_j,Y_j))\cdot\frac{d}{d\theta}X_i+\nabla_{y_1}k((X_i,Y_i),(X_j,Y_j))\cdot\frac{d}{d\theta}X_j\right]}\\
    &\quad-\frac{1}{N^2}\sum^N_{i,j=1}{\left[\nabla_{x_1}k((X_i,Y_i),(Y_j,X_j))\cdot\frac{d}{d\theta}X_i+\nabla_{y_2}k((X_i,Y_i),(Y_j,X_j))\cdot\frac{d}{d\theta}X_j\right]}\\
    &\quad-\frac{1}{N^2}\sum^N_{i,j=1}{\left[\nabla_{x_2}k((Y_i,X_i),(X_j,Y_j))\cdot\frac{d}{d\theta}X_i+\nabla_{y_1}k((Y_i,X_i),(X_j,Y_j))\cdot\frac{d}{d\theta}X_j\right]}\\
    &\quad+\frac{1}{N^2}\sum^N_{i,j=1}{\left[\nabla_{x_2}k((Y_i,X_i),(Y_j,X_j))\cdot\frac{d}{d\theta}X_i+\nabla_{y_2}k((Y_i,X_i),(Y_j,X_j))\cdot\frac{d}{d\theta}X_j\right]},
\end{align*}
where $\nabla_{x_i}k((x_1,x_2),(y_1,y_2))$ denotes the gradient with respect to $x_i$, and $\nabla_{y_i}k((x_1,x_2),(y_1,y_2))$ the gradient with respect to $y_i$ for $i=1,2$. When the kernel $k$ is symmetric, the surrogate gradient we use is
\begin{align*}
    \hat{g}_N(\theta)
    &=\frac{2}{N^2}\sum^N_{i,j=1}{\left[\nabla_{x_1}k((X_i,Y_i),(X_j,Y_j))\cdot\frac{d}{d\theta}X_i-\nabla_{x_1}k((X_i,Y_i),(Y_j,X_j))\cdot\frac{d}{d\theta}X_i\right]}\\
    &\quad-\frac{2}{N^2}\sum^N_{i,j=1}{\left[\nabla_{x_2}k((Y_i,X_i),(X_j,Y_j))\cdot\frac{d}{d\theta}X_i-\nabla_{x_2}k((Y_i,X_i),(Y_j,X_j))\cdot\frac{d}{d\theta}X_i\right]},
\end{align*}
which is \eqref{eq:surrogate_grad}. Hence, for the surrogate gradient, whether applying the Monte Carlo approximation first or not gives the same result. We then define the empirical distribution
\begin{equation*}
    \mu^N_1(\vec{x}):=\frac{1}{N}\sum^N_{i=1}{\delta_{(X_i,Y_i)}(\vec{x})},\qquad\mu^N_2(\vec{x}):=\frac{1}{N}\sum^N_{i=1}{\delta_{(Y_i,X_i)}(\vec{x})},
\end{equation*}
and we further assume
\begin{equation*}
    \frac{d}{d\theta}X_i=\frac{d}{d\theta}G_\theta(Z_i)=v(\theta,G_\theta(Z_i)).
\end{equation*}
It follows that
\begin{align*}
    \hat{g}_N(\theta)
    &=2\int_{\mathbb{R}^{2d}}{\int_{\mathbb{R}^{2d}}{v(\theta,x_1)\cdot\nabla_{x_1}[k(\vec{x},\vec{y})]\mu^N_1(\vec{x})(\mu^N_1(\vec{y})-\mu^N_2(\vec{y}))}d\vec{x}}d\vec{y}\\
    &\quad-2\int_{\mathbb{R}^{2d}}{\int_{\mathbb{R}^{2d}}{v(\theta,x_2)\cdot\nabla_{x_2}[k(\vec{x},\vec{y})]\mu^N_2(\vec{x})(\mu^N_1(\vec{y})-\mu^N_2(\vec{y}))}d\vec{x}}d\vec{y}.
\end{align*}
Let $N\rightarrow +\infty$ and we obtain \eqref{eq:inf_surrogate_grad},
\begin{align*}
    \hat{g}_\infty(\theta)
    &=2\int_{\mathbb{R}^{2d}}{\int_{\mathbb{R}^{2d}}{v(\theta,x_1)\cdot\nabla_{x_1}[k(\vec{x},\vec{y})]\mu_1(\vec{x})(\mu_1(\vec{y})-\mu_2(\vec{y}))}d\vec{x}}d\vec{y}\\
    &\quad-2\int_{\mathbb{R}^{2d}}{\int_{\mathbb{R}^{2d}}{v(\theta,x_2)\cdot\nabla_{x_2}[k(\vec{x},\vec{y})]\mu_2(\vec{x})(\mu_1(\vec{y})-\mu_2(\vec{y}))}d\vec{x}}d\vec{y},
\end{align*}
where $\mu_1(\vec{x})=p_\theta(x_1)p(x_1,x_2)$ and $\mu_2(\vec{x})=p_\theta(x_2)p(x_2,x_1)$.



\bibliographystyle{plain}
\bibliography{Revers}

\end{document}

%% file: section/abstract.tex
\begin{abstract}
Learning to sample from complex unnormalized distributions is a fundamental challenge in computational physics and machine learning. While score-based and variational methods have achieved success in continuous domains, extending them to discrete or mixed-variable systems remains difficult due to ill-defined gradients or high variance in estimators. We propose a unified, target-gradient-free generative sampling framework applicable across diverse state spaces. Building on the fact that detailed balance implies the time-reversibility of the equilibrium stochastic process, we enforce this symmetry as a statistical constraint. Specifically, using a prescribed physical transition kernel (such as Metropolis-Hastings), we minimize the Maximum Mean Discrepancy (MMD) between the joint distributions of forward and backward Markov trajectories. Crucially, this training procedure relies solely on energy evaluations via acceptance ratios, circumventing the need for target score functions or continuous relaxations. We demonstrate the versatility of our method on three distinct benchmarks: (1) a continuous multi-modal Gaussian mixture, (2) the discrete high-dimensional Ising model, and (3) a challenging hybrid system coupling discrete indices with continuous dynamics. Experiments show that our framework accurately reproduces thermodynamic observables and captures mode-switching behavior across all regimes, offering a physically grounded and universally applicable alternative for equilibrium sampling.
\end{abstract}

\keywords{Ising model; discrete generative model; detailed balance; reversibility}\\

\msc{49Q22; 68T07}

%% file: section/introduction.tex
\section{Introduction}
\label{sec:introduction}
Efficient sampling from high-dimensional distributions is a fundamental problem in computational physics and machine learning. While standard methods are well-established for continuous systems, sampling in discrete or mixed-variable domains remains challenging, particularly in equilibrium statistical mechanics of spin systems \cite{landau2021guide, newman1999monte}. The primary goal is to generate $\mathbf{s}$ from the Boltzmann distribution $p(\mathbf{s}) \propto \exp\!\big(-\beta H(\mathbf{s})\big)$, where $H(\mathbf{s})$ is the Hamiltonian and $\beta$ is the inverse temperature. While Markov Chain Monte Carlo (MCMC) algorithms, such as Metropolis-Hastings updates \cite{metropolis1953equation, hastings1970monte} or Glauber dynamics \cite{glauber1963time} are asymptotically exact, their local nature imposes a severe bottleneck near phase transitions: integrated autocorrelation times increase sharply with system size, a phenomenon commonly referred to as critical slowing down \cite{sokal1997monte, wolff1989collective}. This limitation has long motivated the search for efficient global sampling strategies.

Recent progress in machine learning has made deep generative models a practical tool for constructing fast surrogate samplers in computational physics. 
Rather than updating configurations through local Markov moves alone, these approaches introduce a parametric proposal distribution $q_\theta(\mathbf{s})$ that aims to approximate the target Boltzmann distribution $p(\mathbf{s})$. 
Representative examples include variational autoregressive schemes (VANs) \cite{wu2019solving} and normalizing-flow-based Boltzmann generators \cite{noe2019boltzmann}. 
Once trained, $q_\theta$ can generate approximately independent proposals at low marginal cost, which can substantially reduce autocorrelation in downstream estimation. 
A common training objective is the reverse Kullback--Leibler divergence,
\begin{equation}
\mathcal{L}(\theta)=\mathrm{KL}\!\left(q_\theta(\mathbf{s})\,\|\,p(\mathbf{s})\right),
\end{equation}
which is equivalent to minimizing the variational free energy\cite{wu2019solving}. 
This variational route has been particularly effective in continuous-state settings, including molecular simulations and related applications in particle physics \cite{noe2019boltzmann,kohler2021smooth,cranmer2020frontier}.

Extending flow-based generative models to discrete or hybrid systems presents inherent structural challenges. 
Normalizing flows\cite{rezende2015variational, jing2024machine} are built as differentiable bijections over continuous variables \cite{papamakarios2021normalizing}; applying them to discrete states typically requires dequantization or continuous relaxations to obtain a tractable likelihood and enable optimization \cite{ho2019flow++, hoogeboom2020learning}. 
These modifications can introduce modeling bias and make performance sensitive to the chosen relaxation scheme and hyperparameters. 
Furthermore, in hybrid systems where discrete and continuous variables are coupled, gradient-based methods face a dilemma: score functions are ill-defined for the discrete components, while relaxation techniques may fail to capture the sharp dependencies between modes.
More generally, training discrete generative models often relies on stochastic gradient estimators such as REINFORCE \cite{williams1992simple} or continuous reparameterizations such as the Gumbel--Softmax/Concrete relaxations \cite{jang2016categorical, maddison2016concrete}, which may suffer from high variance or non-negligible bias. 
This motivates the development of training objectives that operate directly on the discrete or mixed state space while enforcing the physical constraints of the target equilibrium distribution, without requiring gradients of the energy.

In this paper, we develop a target-gradient-free generative sampling framework for equilibrium distributions based on reversibility. 
Our method bypasses variational free-energy minimization and score-based training, and instead builds on a fundamental physical principle: the time-reversibility of the equilibrium stochastic process.

Concretely, we fix a physical transition kernel $p(\mathbf{s}, \mathbf{s'})$ (e.g., a Metropolis--Hastings step) whose stationary distribution is the target Boltzmann law,and couple it with a neural generator $G_\theta: \mathcal{Z} \to \mathcal{S}$. We denote the induced marginal distribution of the generated samples as $p_\theta$, where $\mathbf{s} = G_\theta(\mathbf{z})$ with $\mathbf{z} \sim \rho_0$. 
Starting from $\mathbf{s} \sim p_\theta$, we generate one-step transitions $\mathbf{s'} \sim p(\mathbf{s},\cdot)$, which induces a joint distribution over pairs
\[
\mu_\theta(\mathbf{s},\mathbf{s'}) := G_\theta(\mathbf{s})\,p(\mathbf{s},\mathbf{s'}).
\]
If $p_\theta$ matches the target equilibrium distribution, then the resulting joint distribution is symmetric under time reversal, i.e., $\mu_\theta(\mathbf{s},\mathbf{s'}) = \mu_\theta(\mathbf{s'},\mathbf{s})$. 
We therefore train $q_\theta$ by penalizing deviations from this symmetry using a Maximum Mean Discrepancy (MMD) loss\cite{gretton2012kernel}:
\begin{equation}
\mathcal{L}(\theta)
=\mathrm{MMD}^2\!\Big(\mu_\theta,\; \mu_\theta \circ \tau^{-1}\Big),
\qquad \tau(\mathbf{s},\mathbf{s'})=(\mathbf{s'},\mathbf{s}).
\label{eq:mmd_reversibility}
\end{equation}
In practice, this compares samples from the forward joint distribution $(\mathbf{s}, \mathbf{s}')$ with their time-reversed counterparts $(\mathbf{s}', \mathbf{s})$ and drives the generator toward a reversible equilibrium sampler. Because this alignment is enforced strictly through the local transition dynamics, training requires only energy evaluations through acceptance ratios (i.e., $\Delta H$), while gradients are taken solely with respect to the network parameters.

Overall, the proposed framework offers the following advantages.
\begin{itemize}
\item \textbf{Data-free Training.} Training does not require samples from the target Boltzmann distribution. It only needs access to the unnormalized density (energy) through density ratios / energy differences used in the acceptance rule (e.g., $\Delta H$), hence bypassing the need for pre-generated equilibrium datasets.

\item \textbf{Target-gradient-free.} Unlike score-based models\cite{song2020score} or Langevin-type dynamics\cite{roberts1996exponential} that require $\nabla_s H(s)$ (ill-defined for discrete states), our method relies only on energy differences used in the MCMC acceptance rule. Optimizing over $\theta$ via standard backpropagation is only performed on the gradient of generator and does not need the gradient of the target distribution or the energy function $H(s)$.

\item \textbf{Direct sampling after training.} Once trained, the generator produces independent draws from $q_\theta$ without running long Markov chains. This completely avoids the serial autocorrelation that plagues the sampling stage and enables massive parallelization, which is particularly advantageous near phase transitions where traditional MCMC methods are severely bottlenecked by critical slowing down.

\item \textbf{Jacobian-free generative modeling.} 
Standard normalizing flows rely on the change-of-variables formula\cite{rezende2015variational}, which necessitates the computation of Jacobian determinants\cite{dinh2016density} to track volume changes. 
This operation is not applicable for discrete variables\cite{hoogeboom2019integer}, creating a fundamental barrier for distributions with discrete parameters (like the spin systems). Our framework eliminates this requirement entirely by enforcing thermodynamic reversibility, allowing for direct training on discrete or mixed domains without the need for differentiable bijections.
\end{itemize}

The rest of the paper is organized as follows. 
Section~\ref{sec:preliminaries} establishes the mathematical foundations of equilibrium sampling, reviewing MCMC dynamics and the detailed balance condition. 
Section~\ref{sec:method} details our target-gradient-free generative framework, introducing the reversibility-based objective and its MMD implementation. 
Section~\ref{sec:theory} provides a theoretical analysis of the proposed method, establishing the weak convergence of the generated distribution to the target Boltzmann equilibrium under the minimization of the reversibility violation. 
Section~\ref{sec:experiments} demonstrates the versatility of our framework through numerical experiments on three distinct benchmarks: a multi-modal Gaussian mixture, the two-dimensional Ising model, and a coupled hybrid system. 
Finally, Section~\ref{sec:conclusion} concludes with a summary and potential extensions.

%% file: section/preliminary.tex
\section{Preliminaries}\label{sec:preliminaries}

In this section, we review the essential concepts and mathematical tools that underpin our framework. Specifically, we discuss the detailed balance condition for Markov chains, the Metropolis-Hastings algorithm, and the Maximum Mean Discrepancy (MMD).

\subsection{Time Reversibility and Detailed Balance}

We begin by recalling the general notion of time reversibility. Let $\{S_t\}_{t\in\mathcal{T}}$ be a stochastic process taking values in a state space $\mathcal{S}$, indexed by a discrete time set $\mathcal{T} \subseteq \mathbb{Z}$.

\begin{definition}[Reversible Stochastic Process]\cite{kelly1979reversibility}
\label{def:reversibility}
A stochastic process $\{S_t\}_{t\in\mathcal{T}}$ is said to be \textit{time-reversible} if, for any finite sequence of time points $t_1 < t_2 < \dots < t_n$ and any time shift $\tau$, the joint probability distributions satisfy:
\begin{equation}
\label{eq:proc-reversible}
  (S_{t_1}, S_{t_2}, \dots, S_{t_n})
  \;\stackrel{d}{=}\;
  (S_{\tau - t_1}, S_{\tau - t_2}, \dots, S_{\tau - t_n}),
\end{equation}
where $\stackrel{d}{=}$ denotes equality in distribution. Intuitively, the statistical properties of the process are invariant under the reversal of the time axis.
\end{definition}

In this work, we focus on time-homogeneous Markov chains on a state space $\mathcal{S}$ for which the state could have discrete components, characterized by a transition kernel $p(s, s') = \mathbb{P}(S_{t+1}=s' \mid S_t=s)$.
A probability measure $\pi$ on $\mathcal{S}$ is called a \textit{stationary distribution} if it satisfies the global balance equation:
\begin{equation}
\label{eq:stationary}
  \pi(s') \;=\; \sum_{s\in\mathcal{S}} \pi(s)\,p(s,s'),
  \qquad \forall\,s'\in\mathcal{S}.
\end{equation}

While stationarity ensures time-invariance, it does not strictly imply reversibility. A stronger condition, known as detailed balance, is sufficient to ensure thermodynamic equilibrium.

\begin{definition}[Detailed Balance\cite{binder1992monte} ]
\label{def:detailed_balance}
A probability distribution $\pi(s)$ and a transition kernel $p(s,s')$ satisfy the \textit{detailed balance} condition if the probability flux between any pair of states is balanced:
\begin{equation}
    \pi(s) p(s,s') = \pi(s') p(s',s), \quad \forall s, s' \in \mathcal{S}.
    \label{eq:detailed_balance}
\end{equation}
\end{definition}

Detailed balance is a sufficient condition for stationarity. The following proposition establishes the crucial link to time-reversibility, serving as the theoretical foundation for our objective.

\begin{proposition}[Stationarity and Symmetry\cite{swain1984handbook}]
\label{prop:reversibility}
If a distribution $\pi$ and a kernel $p$ satisfy the detailed balance condition (Eq.~\ref{eq:detailed_balance}), then:
\begin{enumerate}
    \item $\pi$ is a stationary distribution of the Markov chain defined by $p$.
    \item The chain initialized with $S_t \sim \pi$ is time-reversible. Specifically, the joint distribution of consecutive states is symmetric:
    \begin{equation}
        p(S_t = s, S_{t+1} = s') = p(S_t = s', S_{t+1} = s).
        \label{eq:pairwise_symmetry}
    \end{equation}
\end{enumerate}
\end{proposition}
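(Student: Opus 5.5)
Part 1 follows by summing the detailed balance identity \eqref{eq:detailed_balance} over the first argument, and part 2 comes from reading the same identity as a statement about the law of $(S_t,S_{t+1})$. I will then extend part 2 to the full reversibility of Definition~\ref{def:reversibility}, working on a countable state space $\mathcal{S}$ with sums (for general or mixed spaces, replace sums by integrals against the reference measure).

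\textbf{Stationarity.} Fix $s'\in\mathcal{S}$ and sum both sides of \eqref{eq:detailed_balance} over $s$:
\[
\sum_{s}\pi(s)p(s,s')=\sum_s \pi(s')p(s',s)=\pi(s')\sum_s p(s',s)=\pi(s').
\]
The last equality uses that $p(s',\cdot)$ is a probability distribution. This is exactly the global balance equation \eqref{eq:stationary}. For general measurable spaces I would use the measure form $\pi(ds)p(s,ds')=\pi(ds')p(s',ds)$ and integrate over $s\in\mathcal{S}$.

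\textbf{Pairwise symmetry.} If $S_t\sim\pi$, the Markov property gives
\[
\mathbb{P}(S_t=s,S_{t+1}=s')=\pi(s)p(s,s').
\]
Detailed balance turns this into $\pi(s')p(s',s)=\mathbb{P}(S_t=s',S_{t+1}=s)$, which is \eqref{eq:pairwise_symmetry}.

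\textbf{Full reversibility.} By stationarity, $S_u\sim\pi$ for all $u\ge t$, so the chain is stationary and its finite-dimensional laws are shift invariant. It therefore suffices to compare the law of a path $(S_0,\dots,S_n)$ with that of its reversal. Applying detailed balance factor by factor, moving $\pi$ along the path, gives
\[
\pi(s_0)\prod_{k=0}^{n-1}p(s_k,s_{k+1})=\pi(s_n)\prod_{k=0}^{n-1}p(s_{k+1},s_k).
\]
This is an induction on $n$: use $\pi(s_0)p(s_0,s_1)=\pi(s_1)p(s_1,s_0)$, then repeat with $s_1$ in place of $s_0$. The identity says that $(S_0,\dots,S_n)\stackrel{d}{=}(S_n,\dots,S_0)$.

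For arbitrary times $t_1<\dots<t_n$, I marginalize the intermediate coordinates of this identity, which yields \eqref{eq:proc-reversible}. Combining with shift invariance handles an arbitrary $\tau$. One caveat is that Definition~\ref{def:reversibility} needs a two-sided index set. I would either extend the chain to $\mathbb{Z}$ using the stationary law via Kolmogorov extension, or restrict $\tau$ so that all reflected times lie in $\mathcal{T}$.

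The main obstacle is mostly bookkeeping. The genuine points are the measure-theoretic version for mixed discrete/continuous spaces and the time-index caveat just described; the algebra itself is immediate.
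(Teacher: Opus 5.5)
Your proof is correct. Summing detailed balance over the first argument gives global balance. Reading $\pi(s)p(s,s')=\pi(s')p(s',s)$ as the law of $(S_t,S_{t+1})$ under $S_t\sim\pi$ gives the pairwise symmetry. This is the standard argument; the paper gives no proof of its own and only defers to the cited reference.

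Your path-reversal identity is more than the proposition asks for, but it is worth keeping. Summing out the intermediate states yields $\pi(s)p^{(m)}(s,s')=\pi(s')p^{(m)}(s',s)$. Section~\ref{subsec:coupling} implicitly needs exactly this for the $m$-step pair distribution \eqref{eq:joint_dist}, and one-step detailed balance alone does not give it directly.
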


\subsection{Markov Chain Monte Carlo}
\label{subsec:mcmc}

In statistical physics, direct sampling from the Boltzmann distribution $\pi(s) \propto e^{-\beta H(s)}$ is often infeasible due to the intractability of the partition function. Markov Chain Monte Carlo (MCMC) methods address this by constructing an ergodic Markov chain $\{S_t\}_{t \ge 0}$ that admits $\pi$ as its unique stationary distribution\cite{robert2004monte}.

Formally, for a Markov chain on the state space $\mathcal{S}$, if the chain is ergodic and the kernel $p$ satisfies the stationarity condition (Eq.~\ref{eq:stationary}), the law of the states then converges to the unique invariant distribution $\pi$ independently of the initial condition. Consequently, by Birkhoff's ergodic theorem\cite{walters2000introduction}, for any observable $f : \mathcal{S} \to \mathbb{R}$, the time average provides a strongly consistent estimator of the equilibrium expectation, converging almost surely:
\begin{equation}
\frac{1}{T} \sum_{t=1}^{T} f(S_t)
\;\xrightarrow[T \to \infty]{\text{a.s.}}\;
\mathbb{E}_{\pi}[f]
\;=\;
\sum_{s \in \mathcal{S}} f(s)\,\pi(s).
\label{eq:mcmc-ergodic}
\end{equation}
With this fact, one can then use the states of the chain at different time steps as the samples from the invaraint distribution $\pi$.

\paragraph{Metropolis--Hastings Algorithm.}\cite{metropolis1953equation, hastings1970monte}
A standard strategy to construct such kernels is the Metropolis--Hastings (MH) algorithm. 
Given a proposal distribution $q(s,s')$, a proposed move is accepted with probability:
\begin{equation}
\alpha(s \to s')
=
\min\!\left\{
1,\;
\frac{\pi(s')\,q(s,s')}{\pi(s)\,q(s',s)}
\right\}.
\label{eq:mh-accept}
\end{equation}
The resulting transition kernel $p$ satisfies detailed balance and hence preserves $\pi$.
In the common case of symmetric proposals $q(s',s) = q(s,s')$ (e.g., single-spin flips), the acceptance rule simplifies to the classical Metropolis criterion $\alpha(s \to s') = \min\{1, \pi(s')/\pi(s)\}$, which depends only on the energy difference\cite{landau2021guide}.

%% file: section/generative_framework_based_on_reversibility.tex
\section{Generative Framework Based on Reversibility}
\label{sec:method}

In this section, we propose RevGen, a general training framework for generative sampling based on time-reversibility. Our primary objective is to learn a parametric generator $G_\theta(\mathbf{s})$ that approximates a target distribution $\pi(\mathbf{s})$ defined on a state space $\mathcal{X}$.

Crucially, our framework is agnostic to the nature of the state space, applicable whether $\mathcal{X}$ is discrete (e.g., lattice spin configurations), continuous (e.g., vectors in $\mathbb{R}^d$), or a hybrid of both. Furthermore, the method assumes only that $\pi(\mathbf{s})$ is known up to a multiplicative constant, requiring access only to the energy function (or probability ratios) rather than gradients of the target density. Instead of minimizing variational free energy, we construct a training objective based on the universal physical principle of detailed balance.

\subsection{Markovian Coupling and Time-Reversibility}
\label{subsec:coupling}

Let $p(\mathbf{s},\mathbf{s}')$ be a prescribed transition kernel that satisfies the detailed balance condition \eqref{eq:detailed_balance} with respect to the target distribution $\pi(\mathbf{s})$:
\begin{equation}
    \pi(\mathbf{s}) p(\mathbf{s},\mathbf{s}') = \pi(\mathbf{s}') p(\mathbf{s}',\mathbf{s}).
\end{equation}
The kernel $p$ acts as a "physical oracle" and can be chosen flexibly. For continuous systems, it typically takes the form of a Gaussian random walk Metropolis step; for discrete systems, it may represent a Metropolis update with local spin flips.

We define a forward sampling process by first drawing a configuration from the neural generator, $\mathbf{s} \sim p_\theta(\mathbf{s})$, and then evolving it under the prescribed kernel for $m \ge 1$ steps, yielding $\mathbf{s}' \sim p^{(m)}(\mathbf{s},\cdot)$. This procedure induces a joint distribution over state pairs, denoted as $\mu_\theta(\mathbf{s}, \mathbf{s}')$:
\begin{equation}
    \mu_\theta(\mathbf{s}, \mathbf{s}') = p_\theta(\mathbf{s}) \, p^{(m)}(\mathbf{s}, \mathbf{s}').
    \label{eq:joint_dist}
\end{equation}
A fundamental property of equilibrium dynamics is that detailed balance is equivalent to the time-reversal symmetry of this joint distribution. If the generator perfectly recovers the target equilibrium (i.e., $p_\theta \equiv p$), substituting Eq.~\eqref{eq:detailed_balance} into Eq.~\eqref{eq:joint_dist} implies:
\begin{equation}
    \mu_\theta(\mathbf{s}, \mathbf{s}') = \mu_\theta(\mathbf{s}', \mathbf{s}).
    \label{eq:symmetry_condition}
\end{equation}
Accordingly, the discrepancy between the forward pair distribution $\mu_\theta(\mathbf{s},\mathbf{s}')$ and its time-reversed counterpart $\mu_\theta(\mathbf{s}',\mathbf{s})$ serves as a statistical indicator of deviation from equilibrium.

\subsection{The Reversibility-Based Objective via MMD}
\label{subsec:mmd_objective}

Since the analytical form of the joint density $\mu_\theta$ is intractable, we employ the Maximum Mean Discrepancy (MMD)~\cite{gretton2012kernel} to quantify the violation of the symmetry in Eq.~\eqref{eq:symmetry_condition}. MMD is a kernel-based metric that measures the distance between distributions based on samples, without requiring density estimation.

We define the loss function as the squared MMD distance between the forward pair distribution $\mu_\theta$ and the backward (swapped) distribution $\mu^\dagger_\theta$, where pairs are flipped $(\mathbf{s}, \mathbf{s}') \to (\mathbf{s}', \mathbf{s})$. 
Given a batch of $N$ samples $\{(\mathbf{s}_i, \mathbf{s}'_i)\}_{i=1}^N$ generated by the forward process, we construct the forward set $\mathcal{D}_{\text{fwd}} = \{\mathbf{x}_i\}_{i=1}^N$ with $\mathbf{x}_i = (\mathbf{s}_i, \mathbf{s}'_i)$, and the backward set $\mathcal{D}_{\text{bwd}} = \{\mathbf{y}_i\}_{i=1}^N$ with $\mathbf{y}_i = (\mathbf{s}'_i, \mathbf{s}_i)$. The empirical training objective is formulated using the biased $V$-statistic of the Maximum Mean Discrepancy, which is computationally efficient for large-scale mini-batch optimization. Specifically, given $N$ pairs of forward and backward samples $\{(\mathbf{x}_i, \mathbf{y}_i)\}_{i=1}^N$, the loss is computed as:
\begin{equation}
    \mathcal{L}(\theta) = \frac{1}{N^2} \sum_{i, j=1}^N k(\mathbf{x}_i, \mathbf{x}_j) + \frac{1}{N^2} \sum_{i, j=1}^N k(\mathbf{y}_i, \mathbf{y}_j) - \frac{2}{N^2} \sum_{i, j=1}^N k(\mathbf{x}_i, \mathbf{y}_j),
    \label{eq:empirical_loss}
\end{equation}
where $k(\cdot, \cdot)$ denotes the chosen kernel function. 

By utilizing the $V$-statistic, we can leverage highly optimized matrix operations in PyTorch, where the loss is equivalent to the mean of the kernel Gram matrices. This estimator remains asymptotically consistent as the batch size $N$ increases, providing stable gradients for the generator's convergence.
Here, the kernel function $k(\cdot, \cdot)$ is defined on the product space $\mathcal{X} \times \mathcal{X}$. We choose $k$ to be characteristic (e.g., a Gaussian RBF kernel for continuous spaces or a Hamming-distance kernel for discrete spaces\cite{kondor2002diffusion, sejdinovic2013equivalence}) to ensure that $\mathcal{L}(\theta)=0$ if and only if detailed balance is strictly satisfied.

\subsection{Optimization Dynamics and Gradient Analysis}
\label{subsec:optimization_dynamics}

The generator parameters $\theta$ are optimized via Adam\cite{kingma2014adam} by minimizing $\mathcal{L}(\theta)$. A critical structural feature of our framework is the gradient flow during backpropagation. The generation of a pair $(\mathbf{s}, \mathbf{s}')$ involves two stages:
\begin{enumerate}
    \item \textbf{Generation ($\mathbf{s} \sim p_\theta$):} This step is fully differentiable (e.g., via the reparameterization trick\cite{kingma2013auto,rezende2014stochastic}), establishing a direct path for gradients from the sample $\mathbf{s}$ to $\theta$.
    \item \textbf{Transition ($\mathbf{s}' \sim p(\mathbf{s},\cdot)$):} This step involves stochastic acceptance/rejection dynamics, which are non-differentiable.
\end{enumerate}

To stabilize training, we enforce a \textbf{stop-gradient} condition on the transition step. The updated configuration $\mathbf{s}'$ is treated as a fixed target detached from the computation graph. Consequently, rather than computing the exact intractable gradient, we define a surrogate gradient estimator $\hat{\mathbf{g}}_\theta$ derived solely by differentiating the loss with respect to the generator's direct output $\mathbf{s}$:
\begin{equation}
    \hat{\mathbf{g}}_\theta
    := \mathbb{E}_{\mathbf{s}, \mathbf{s}'}
    \left[
    \nabla_{\mathbf{s}} \mathrm{MMD}^2(\dots) \bigg|_{\mathbf{s}' \text{ fixed}} \cdot \nabla_\theta \mathbf{s}
    \right] \approx \nabla_\theta \mathcal{L}(\theta).
    \label{eq:grad_stop_kernel}
\end{equation}

By detaching $\mathbf{s}'$, we treat the MCMC transition not as a differentiable layer, but as a fixed operator that reveals the local direction toward the equilibrium manifold. This effectively implements a semi-gradient update via $\hat{\mathbf{g}}_\theta$: the generator attempts to move $\mathbf{s}$ such that the forward pair $(\mathbf{s}, \mathbf{s}')$ becomes statistically indistinguishable from its time-reversed counterpart $(\mathbf{s}', \mathbf{s})$, using $\mathbf{s}'$ as a physically grounded anchor.

\begin{remark}[Framework Extensibility]
    While we demonstrate the framework using MMD and gradient-based learning, the underlying principle of time-reversibility is inherently implementation-agnostic. For instance, one may adopt a weak formulation \cite{cai2024weak,ZANG2020109409}, in which the fixed MMD metric is replaced by a learnable discriminator $f_\phi(\mathbf{s}, \mathbf{s}')$ parameterized by a neural network. Instead of relying on a static RKHS kernel, this discriminator acts as an adaptive metric trained to maximize the discrepancy between the forward trajectory distribution $\mu_F(\mathbf{s}, \mathbf{s}') = p_\theta(\mathbf{s}) p(\mathbf{s},\mathbf{s}')$ and the time-reversed backward flow $\mu_B(\mathbf{s}, \mathbf{s}') = p_\theta(\mathbf{s}') p(\mathbf{s}', \mathbf{s})$. The framework then transforms into a weak adversarial minimax game:
    \begin{equation}
        \min_{\theta} \max_{\phi} \; \mathbb{E}_{(\mathbf{s}, \mathbf{s}') \sim P_F}[f_\phi(\mathbf{s}, \mathbf{s}')] - \mathbb{E}_{(\mathbf{s}', \mathbf{s}) \sim P_B}[f_\phi(\mathbf{s}', \mathbf{s})].
    \end{equation}
    In this adversarial regime, the discriminator dynamically learns optimal feature representations to penalize detailed balance violations. Furthermore, to optimize the generator without requiring differentiable relaxations (e.g., Gumbel-Softmax) for the discrete sampling steps, the optimization could alternatively employ derivative-free methods \cite{larson2019derivative}, such as Evolutionary Strategies (ES) \cite{salimans2017evolution}, score-function estimators (REINFORCE) \cite{lievin2020optimal} or consensus-based optimization method (CBO) \cite{pinnau2017consensus,carrillo2021consensus}, to directly traverse the non-differentiable discrete landscape.
\end{remark}

\subsection{Training Algorithm}
\label{sec:algorithm}
The training procedure, summarized in Algorithm~\ref{alg:training}, iterates between neural generation, physical simulation, and parameter updates.

\begin{algorithm}[H]
    \caption{Generative Sampling via Reversibility (RevGen)}
    \label{alg:training}
    \begin{algorithmic}[1]
        \Require Target energy $E(\mathbf{s})$, Base noise distribution $\rho_0$
        \Require Transition Kernel $p$ (e.g., Metropolis), Steps $m$
        \Require Batch size $N$, Max iterations $N_{I}$
        
        \State Initialize generator parameters $\theta$.
        
        \For{$n = 1$ to $N_{I}$}
            \State \textbf{1. Generation:}
            \State Sample noise $\{\mathbf{z}_i\}_{i=1}^N \sim \rho_0$.
            \State Generate states $\mathbf{s}_i = G_\theta(\mathbf{z}_i)$.
            
            \State \textbf{2. Transition (Simulation):}
            \For{$i = 1$ to $N$}
                \State Sample $\mathbf{s}'_i \sim p^m(\mathbf{s}_i,\cdot)$ 
                \State $\mathbf{s}'_i \leftarrow \text{stop\_gradient}(\mathbf{s}'_i)$ \Comment{Detach from graph}
            \EndFor
            
      \State \textbf{3. Optimization:}
            \State Construct pairs: $\mathcal{D}_{\text{fwd}} = \{(\mathbf{s}_i, \mathbf{s}'_i)\}$, $\mathcal{D}_{\text{bwd}} = \{(\mathbf{s}'_i, \mathbf{s}_i)\}$
            \State Compute Loss $\mathcal{L} = \widehat{\text{MMD}}^2(\mathcal{D}_{\text{fwd}}, \mathcal{D}_{\text{bwd}})$ and derive surrogate gradient $\hat{\mathbf{g}}_\theta$ via Eq.~\eqref{eq:grad_stop_kernel}.
            \State Update parameters: $\theta \leftarrow \text{Adam}(\theta, \hat{\mathbf{g}}_\theta)$.
        \EndFor
        \Ensure Trained generator $G_\theta$
    \end{algorithmic}
\end{algorithm}

\subsection{Generator Architectures}
\label{subsec:architectures}

The proposed framework is compatible with any differentiable generator architecture $G_\theta: \mathcal{Z} \to \mathcal{X}$. We instantiate specific architectures based on the data topology.

\paragraph{Continuous Systems.} 
For continuous state spaces, we employ deep generative networks (e.g., Normalizing Flows \cite{rezende2015variational} or ResNets \cite{he2016deep}) that map a base noise distribution $\rho_0$ to the target space $\mathbb{R}^d$. Unlike likelihood-based methods, our MMD objective does not require invertible mappings or tractable Jacobian determinants, allowing for a flexible choice of architectures provided they are differentiable.

\paragraph{Discrete Systems.} 
The generator is parameterized by an MLP that maps continuous latent noise $\mathbf{z}$ to real-valued logits $\mathbf{h} = \text{MLP}(\mathbf{z})$, which are subsequently mapped to physical states $\mathbf{s} \in \{-1, +1\}^d$ via a discrete sampling layer (e.g., the $\text{sign}(\cdot)$ function or categorical sampling). 
As emphasized in Section~\ref{sec:introduction}, our fundamental aim is to operate the training objective directly on the strictly discrete state space without requiring continuous reparameterization of the target density. 
Therefore, we clarify that the continuous relaxation employed here---specifically, the Straight-Through Estimator (STE) \cite{bengio2013estimating} used to approximate gradients during the backward pass---is exclusively a computational tool for the network's optimization procedure, rather than a modification to the training loss itself. 
The reversibility-based MMD objective remains rigorously discrete. In fact, if one wishes to avoid continuous relaxations entirely, this exact discrete objective can be natively optimized using derivative-free optimization methods.

\paragraph{Hybrid Systems.} 
For systems comprising coupled continuous variables $\mathbf{x}$ and discrete indices $k$, we employ a multi-head architecture. A shared MLP backbone first maps the latent noise $\mathbf{z}$ to a common feature representation. This representation is then bifurcated into two specific heads:
(1) A continuous head that directly outputs the continuous variables $\mathbf{x} \in \mathbb{R}^d$;
(2) A discrete head that outputs logits $\mathbf{h}$ for the discrete indices. The final discrete state $k$ is obtained by sampling from the categorical distribution parameterized by $\mathbf{h}$. 
This shared design allows the generator to capture the joint dependencies between the continuous dynamics and the discrete modes within a unified latent representation.

%% file: section/theory.tex
\section{Theoretical Analysis}
\label{sec:theory}

In this section, we provide a theoretical analysis of the MMD objective and the corresponding training procedure introduced in Section \ref{subsec:optimization_dynamics}. For notational simplicity throughout this theoretical discussion, we omit the boldface typography for state variables and vectors.

\subsection{Weak Convergence of Trained Distribution}

In this subsection, we review essential properties of reproducing kernel Hilbert spaces (RKHS) and examine the Maximum Mean Discrepancy (MMD) as a metric induced by such a space. Under the standard condition that the kernel $k$ is characteristic, the MMD metrizes the weak convergence of probability measures. Consequently, driving the MMD-based loss function $\mathcal{L}(\theta)$ to zero ensures that the trained distribution $p_\theta$ converges to the target $\pi$ in the weak topology.

The general theory of RKHS was developed by Aronszajn \cite{aronszajn1950theory}. Let $X$ be a locally compact Hausdorff space. The symmetric function $k:X\times X\to\mathbb{R}$ is integrally strictly positive definite if for arbitrary nonzero function $f:X\to\mathbb{R}$ it holds that
\begin{equation}
    \int_{X\times X}{k(x,y)f(x)f(y)}dxdy>0.
\end{equation}
Then there exists a unique Hilbert space $\mathcal{H}_k$ of functions on $X$ equipped with the inner product $\langle\cdot,\cdot\rangle_k$ satisfying:
\begin{itemize}
    \item $k_x\in\mathcal{H}_k$ for all $x\in X$, where $k_x(y):=K(x,y)$;
    \item $\mbox{span}\{k_x\}_{x\in X}$ is dense in $\mathcal{H}_k$;
    \item $f(x)=\langle f,k_x\rangle_k$ for all $f\in\mathcal{H}_k$.
\end{itemize}
And $\mathcal{H}_k$ is called the reproducing kernel Hilbert space with reproducing kernel $k$. When $k$ is bounded, we can define the kernel mean embedding $f_\mu(x):=\int_X{k(x,y)}d\mu(y)$ for probability measure $\mu$. For each measurable function $g$ on $X$, denote
\begin{equation*}
    \mu(g):=\int_X{g(x)}d\mu(x),
\end{equation*}
and it can be verified that the Pettis property holds,
\begin{equation}\label{eq:Pettis}
    \mu(g)=\langle f_\mu,g\rangle_k,\quad\forall\ g\in\mathcal{H}_k.
\end{equation}
In particular, for any two probability measures $\mu,\nu$, the MMD is defined as the RKHS norm of the difference between their kernel mean embeddings,
\begin{equation}\label{eq:kernel_norm}
    \|\mu-\nu\|_{MMD}:=\|f_\mu-f_\nu\|_k=\left(\int_{X\times X}{k(x,y)}(d\mu(x)-d\nu(x))(d\mu(y)-d\nu(y))\right)^\frac{1}{2}.
\end{equation}

The equivalence between MMD topology and weak topology was studied in \cite{simon2023metrizing}.
\begin{lemma}(Simon-Gabriel et al. 2023)\label{th:weak_convergence}
    Let $k$ be an integrally strictly positive definite kernel such that $\mathcal{H}_k\subset\mathcal{C}_0$, and let $\{\mu_\alpha\}$ and $\mu$ be Randon probability measures. If $k$ is continuous, then the following are equivalent:
    \begin{enumerate}[(i)]
        \item (convergence in strong RKHS topology) $\|\mu_\alpha-\mu\|_{MMD}\to0$;
        \item (convergence in weak RKHS topology) $\mu_\alpha(f)-\mu(f)\to0$ for all $f\in\mathcal{H}_k$;
        \item (convergence in weak-* topology) $\mu_\alpha(f)-\mu(f)\to0$ for all $f\in\mathcal{C}_0$;
        \item (convergence in weak topology) $\mu_\alpha(f)-\mu(f)\to0$ for all $f\in\mathcal{C}_b$.
    \end{enumerate}
\end{lemma}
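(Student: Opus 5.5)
The plan is to prove the cycle (i)$\Rightarrow$(ii)$\Rightarrow$(iii)$\Rightarrow$(iv)$\Rightarrow$(i), using the Pettis property \eqref{eq:Pettis}, the norm formula \eqref{eq:kernel_norm}, and standard functional-analytic facts.

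\textbf{(i)$\Rightarrow$(ii).} This is Cauchy--Schwarz in $\mathcal{H}_k$. For $f\in\mathcal{H}_k$, \eqref{eq:Pettis} gives
\[
|\mu_\alpha(f)-\mu(f)|=|\langle f_{\mu_\alpha}-f_\mu,f\rangle_k|\le \|f\|_k\,\|\mu_\alpha-\mu\|_{MMD}\to0 .
\]

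\textbf{(ii)$\Rightarrow$(iii).} This is a density-plus-uniform-boundedness argument. Probability measures have total variation $1$, so it suffices to show that $\mathcal{H}_k$ is dense in $(\mathcal{C}_0,\|\cdot\|_\infty)$. Given $f\in\mathcal{C}_0$ and $\varepsilon>0$, pick $g\in\mathcal{H}_k$ with $\|f-g\|_\infty<\varepsilon$. Then $|\mu_\alpha(f)-\mu(f)|\le 2\varepsilon+|\mu_\alpha(g)-\mu(g)|$, which gives the claim.

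For density I argue by duality. If $\mathcal{H}_k$ were not dense, Hahn--Banach and Riesz--Markov would give a nonzero finite signed Radon measure $\nu$ annihilating $\mathcal{H}_k$. In particular $\nu(k_x)=0$ for all $x$, so $f_\nu\equiv0$ and
\[
\iint k\,d\nu\,d\nu=0 .
\]
This contradicts integral strict positive definiteness, interpreted (as in Simon-Gabriel et al.) over finite signed measures rather than functions.

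\textbf{(iii)$\Rightarrow$(iv).} This is the standard tightness argument for probability measures on a locally compact Hausdorff space; mass cannot escape because the limit $\mu$ is a probability measure. Given $\varepsilon$, choose a compact $K$ with $\mu(K)>1-\varepsilon$ by Radon regularity. By Urysohn's lemma, take $\phi\in C_c$ with $0\le\phi\le1$ and $\phi=1$ on $K$. Vague convergence gives $\mu_\alpha(\phi)\to\mu(\phi)\ge1-\varepsilon$, so eventually $\mu_\alpha(1-\phi)<2\varepsilon$. For $f\in\mathcal{C}_b$, split $f=f\phi+f(1-\phi)$: the first term lies in $\mathcal{C}_0$, and the second contributes at most $3\varepsilon\|f\|_\infty$.

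\textbf{(iv)$\Rightarrow$(i).} This is the main obstacle, since it needs a uniformity statement. Expanding \eqref{eq:kernel_norm},
\[
\|\mu_\alpha-\mu\|_{MMD}^2=\iint k\,d\mu_\alpha d\mu_\alpha-2\iint k\,d\mu_\alpha d\mu+\iint k\,d\mu\,d\mu .
\]
I would show that weak convergence of $\mu_\alpha$ implies weak convergence of the product $\mu_\alpha\otimes\mu_\alpha\to\mu\otimes\mu$ and of $\mu_\alpha\otimes\mu\to\mu\otimes\mu$ on $X\times X$. Since $k$ is bounded and continuous ($k(x,x)=\|k_x\|^2$ is bounded because $k_x\in\mathcal{H}_k\subset\mathcal{C}_0$), each of the three terms then converges to $\iint k\,d\mu\,d\mu$, and the expression tends to $0$.

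Product convergence for nets is the delicate point. My plan is to use the tightness from the previous step to reduce to compact sets. On $K\times K$, I approximate $k$ uniformly by finite sums $\sum_j g_j(x)h_j(y)$ with $g_j,h_j\in C_c$, via Stone--Weierstrass. The error off $K\times K$ is controlled by $\|k\|_\infty\cdot 4\varepsilon$.

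Alternatively, I could avoid products: $f_{\mu_\alpha}(x)=\mu_\alpha(k_x)\to f_\mu(x)$ pointwise, and equicontinuity of $\{k_x\}$ on compacts upgrades this to local uniform convergence. The cross terms can then be handled with the same tightness estimate.
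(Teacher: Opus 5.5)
Your proposal is correct and runs the same cycle (i)$\Rightarrow$(ii)$\Rightarrow$(iii)$\Rightarrow$(iv)$\Rightarrow$(i) as the paper, with the same ingredients: the Pettis property; density of $\mathcal{H}_k$ in $\mathcal{C}_0$ plus the bound $|\mu_\alpha(h)|\le\|h\|_\infty$; the no-escape-of-mass upgrade from vague to weak convergence; and weak convergence of products. The differences are that you prove directly what the paper only cites from Simon-Gabriel et al., Tr\`eves and Berg--Christensen--Ressel, and that you read integral strict positive definiteness over finite signed measures, which is what the density step actually needs. One small repair: $\sup_x k(x,x)<\infty$ does not follow merely from each $k_x$ lying in $\mathcal{C}_0$; apply Banach--Steinhaus to the evaluation functionals $f\mapsto\langle f,k_x\rangle_k$, which are pointwise bounded on $\mathcal{H}_k$ because every $f\in\mathcal{H}_k\subset\mathcal{C}_0$ is bounded (equivalently, use \cite[Lemma 8]{simon2023metrizing}, which the paper quotes).
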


Let $\mathcal{C}_0$ be the set of continuous functions on $X$ that vanish at infinity, and $\mathcal{C}_b$ the set of bounded continuous functions on $X$. By definition \eqref{eq:kernel_norm} and Pettis property \eqref{eq:Pettis}, $(i)\Rightarrow(ii)$ is clear. Since $\mathcal{H}_k$ is dense in $\mathcal{C}_0$ by \cite[Corollary 3, Theorem 8]{simon2018kernel}, and the set of Randon probability measures is equicontinuous by \cite[Theroem 33.2]{treves1967topological}, then \cite[Proposition 32,5]{treves1967topological} implies that $(ii)\Rightarrow(iii)$. At last, \cite[Corollary 2.4.3]{berg1984harmonic} yields $(iii)\Rightarrow(iv)$, and \cite[Theorem 2.3.3]{berg1984harmonic} yields $(iv)\Rightarrow(i)$.

Now we return to measures on $X:=\mathbb{R}^{d}$, and make use of Lemma \ref{th:weak_convergence} to derive the weak convergence of $p_\theta\to\pi$ as the loss $\mathcal{L}(\theta)$ tends to zero. Recall that by definition
\begin{equation*}
    \mathcal{L}(\theta)=\|p_\theta(x_1)p(x_1,x_2)-p_\theta(x_2)p(x_2,x_1)\|^2_{MMD},
\end{equation*}
where $p_\theta$ is the trained distribution, and $p(x_1,x_2)$ is the transition probability from state $x_1$ to state $x_2$. Then by Lemma \ref{th:weak_convergence}, when the trained distributions $\{p_\theta\}$ are Randon measures, $\mathcal{L}(\theta)\to0$ implies that $p_\theta(x_1)p(x_1,x_2)-p_\theta(x_2)p(x_2,x_1)\to 0$ in the weak topology. This does not directly imply the desired weak convergence $p_\theta(x_1)-\pi(x_1)\to0$. To bridge this gap, we require an additional technical lemma that connects the asymmetric difference involving the transition kernel $p(\cdot,\cdot)$ to a symmetric difference involving the target measure $\pi(\cdot)$. Recall that a sequence of probability measures $\mu_n$ is tight if for any $\varepsilon>0$, there is a compact set $A$ such that $\inf_{n}{\mu_n(A)}\geq1-\varepsilon$ \cite{durrett2019probability}. 

\begin{lemma}\label{th:target_convergence}
    Let the set of all possible distribution of the neural network $\{p_\theta\}$ be tight, and the continuous transition probability $p(x_1,x_2)$ strictly positive. Along a sequence $\{\theta_{\ell}\}$, if $p_\theta(x_1)p(x_1,x_2)-p_\theta(x_2)p(x_2,x_1)\to 0$ in weak topology, then $p_\theta(x_1)\pi(x_2)-\pi(x_1)p_\theta(x_2)\to0$ in weak topology.
\end{lemma}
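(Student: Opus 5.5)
The plan is to convert the weak convergence of $\mu^\ell_1-\mu^\ell_2$, where $\mu^\ell_1(dx_1,dx_2)=p_{\theta_\ell}(x_1)p(x_1,x_2)$ and $\mu^\ell_2(dx_1,dx_2)=p_{\theta_\ell}(x_2)p(x_2,x_1)$, into the weak convergence of $p_{\theta_\ell}(x_1)\pi(x_2)-\pi(x_1)p_{\theta_\ell}(x_2)$. This is done by testing against rescaled test functions, using detailed balance $\pi(x_1)p(x_1,x_2)=\pi(x_2)p(x_2,x_1)$. The key algebraic identity is
\begin{equation*}
p_\theta(x_1)p(x_1,x_2)-p_\theta(x_2)p(x_2,x_1)=\frac{p(x_1,x_2)}{\pi(x_2)}\Big(p_\theta(x_1)\pi(x_2)-\pi(x_1)p_\theta(x_2)\Big),
\end{equation*}
which holds because $p(x_2,x_1)=\pi(x_1)p(x_1,x_2)/\pi(x_2)$. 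Thus the target measure $\nu_\ell:=p_{\theta_\ell}\otimes\pi-\pi\otimes p_{\theta_\ell}$ (coordinates $(x_1,x_2)$) equals $w\cdot(\mu^\ell_1-\mu^\ell_2)$ with weight $w(x_1,x_2)=\pi(x_2)/p(x_1,x_2)$. Positivity and continuity of $p$ ensure that $w$ is continuous and finite, assuming $\pi$ is a continuous density, which the setting implicitly requires.

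Fix $f\in\mathcal{C}_b(\mathbb{R}^{2d})$. We must show $\nu_\ell(f)\to0$. If $w$ were bounded we would simply test the hypothesis against $fw$, but $w$ may be unbounded at infinity. This is where tightness enters. Given $\varepsilon>0$, tightness of $\{p_\theta\}$, together with the fact that $\pi$ is a single probability measure, gives a compact $A\subset\mathbb{R}^d$ with $p_{\theta}(A^c)<\varepsilon$ for all $\theta$ and $\pi(A^c)<\varepsilon$. Choose a continuous cutoff $\chi:\mathbb{R}^{2d}\to[0,1]$ with $\chi=1$ on $A\times A$ and compact support $K$. Then split
\begin{equation*}
\nu_\ell(f)=\nu_\ell(f\chi)+\nu_\ell(f(1-\chi)).
\end{equation*}

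For the second term, $1-\chi$ vanishes on $A\times A$. Hence $|\nu_\ell(f(1-\chi))|\le \|f\|_\infty\big[(p_{\theta_\ell}\otimes\pi)((A\times A)^c)+(\pi\otimes p_{\theta_\ell})((A\times A)^c)\big]\le 4\|f\|_\infty\varepsilon$, uniformly in $\ell$. For the first term, $f\chi w$ is continuous and compactly supported, since $w$ is continuous on the compact set $K$ by continuity and strict positivity of $p$. So $g:=f\chi w\in\mathcal{C}_b$, and by the identity above, $\nu_\ell(f\chi)=(\mu^\ell_1-\mu^\ell_2)(g)\to0$ by hypothesis. Combining, $\limsup_\ell|\nu_\ell(f)|\le4\|f\|_\infty\varepsilon$. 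Since $\varepsilon$ is arbitrary, the claim follows.

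The main obstacle is justifying the identity at the level of measures rather than densities: it requires $\pi$ to have a continuous, locally bounded density and $p(x_1,\cdot)$ to be a density with respect to Lebesgue measure, so that $w$ is well defined and continuous on compacts. In the discrete or hybrid case, the same argument goes through with counting measure, where continuity is automatic. A secondary subtlety is that "weak convergence to $0$" of signed measures is interpreted as $\nu_\ell(f)\to0$ for all $f\in\mathcal{C}_b$. This is exactly what the truncation argument delivers, because tightness controls the total mass outside $A\times A$ for both the positive and negative parts.
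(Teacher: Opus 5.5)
Your proposal is correct and follows essentially the same route as the paper: use tightness to discard the mass outside a compact set, then test the hypothesis against the truncated, reweighted function $f\chi\,\pi(x_2)/p(x_1,x_2)\in\mathcal{C}_b$, and invoke detailed balance to turn $\mu^\ell_1-\mu^\ell_2$ into $p_{\theta_\ell}\otimes\pi-\pi\otimes p_{\theta_\ell}$. Your split $f=f\chi+f(1-\chi)$ is slightly cleaner than the paper's argument. The paper equates $\int_A f\,\nu_\ell$ directly with the reweighted integral, although that integral really equals $\int f\chi_A\,\nu_\ell$; the discrepancy is harmless, being absorbed by the tail bound. You also rightly make explicit the continuity of $\pi$, which the paper uses tacitly.
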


\begin{proof}
    Let $f\in\mathcal{C}_b$. Then there exists $M>0$ such that $|f(x)|\leq M$ for all $x\in\mathbb{R}^{2d}$. And by tightness of $\{p_\theta\}_\theta$, for arbitrary $\varepsilon>0$, by tightness of $\{p_\theta\}$, there exists a compact set $A(\varepsilon)$ such that
    \begin{equation*}
        \int_{A^c}{\left|p_\theta(x_1)\pi(x_2)-\pi(x_1)p_\theta(x_2)\right|}dx_1dx_2<M^{-1}\varepsilon,
    \end{equation*}
    where $A^c$ is complement of $A$ in $\mathbb{R}^{2d}$. It follows that
    \begin{align*}
        &\left|\int_{A^c}{f(x_1,x_2)\left(p_\theta(x_1)\pi(x_2)-\pi(x_1)p_\theta(x_2)\right)}dx_1dx_2\right|\\
        &\qquad\leq\int_{A^c}{|f(x_1,x_2)|\left|p_\theta(x_1)\pi(x_2)-\pi(x_1)p_\theta(x_2)\right|}dx_1dx_2\\
        &\qquad\leq\int_{A^c}{M\left|p_\theta(x_1)\pi(x_2)-\pi(x_1)p_\theta(x_2)\right|}dx_1dx_2\\
        &\qquad\leq\varepsilon.
    \end{align*}
    Define the smooth truncation
    \begin{equation*}
        g(x_1,x_2):=f(x_1,x_2)\pi(x_2)p^{-1}(x_1,x_2)\chi_A(x_1,x_2),
    \end{equation*}
    where $\chi_A\in\mathcal{C}^{\infty}_c(\mathbb{R}^d)$ is a smooth cutoff function supported in a compact set, with $\chi_A=1$ on $A$ and $0\leq\chi_A\leq1$. Since $p(x_1,x_2)$ is strictly positive and continuous, $\pi(x_2)p^{-1}(x_1,x_2)$ is bounded by some constant $C(A)$ on the support of $\chi_A$. Then $g\in\mathcal{C}_b$, and for $\ell$ large enough the following inequality holds by the weak convergence of $p_\theta(x_1)p(x_1,x_2)-p_\theta(x_2)p(x_2,x_1)$,
    \begin{equation*}
        \left|\int_{\mathbb{R}^{2d}}{g(x_1,x_2)\left(p_{\theta_\ell}(x_1)p(x_1,x_2)-p_{\theta_\ell}(x_2)p(x_2,x_1)\right)}dx_1dx_2\right|<\varepsilon.
    \end{equation*}
    It follows by the detailed balance condition $\pi(x_1)p(x_1,x_2)=\pi(x_2)p(x_2,x_1)$,
    \begin{multline*}
  \left|\int_A{f(x_1,x_2)\left(p_{\theta_\ell}(x_1)\pi(x_2)-\pi(x_1)p_{\theta_\ell}(x_2)\right)}dx_1dx_2\right|\\
      =\left|\int_{\mathbb{R}^{2d}}{g(x_1,x_2)\left(p_{\theta_\ell}(x_1)p(x_1,x_2)-p_{\theta_\ell}(x_2)p(x_2,x_1)\right)}dx_1dx_2\right|<\varepsilon.
    \end{multline*}
    which then verifies the lemma.
\end{proof}

Note that $p_\theta(x_1)-\pi(x_1)$ is the marginal distribution of $p_\theta(x_1)\pi(x_2)-\pi(x_1)p_\theta(x_2)$, the following Theorem \ref{th:convergence} is a straight result of Lemma \ref{th:weak_convergence} and Lemma \ref{th:target_convergence}.

\begin{theorem}\label{th:convergence}
    Let $k$ be a continuous integrally strictly positive definite kernel such that $\mathcal{H}_k\subset\mathcal{C}_0$. Assume the set of all possible distribution of the neural network $\{p_\theta\}$ is a tight set of probability measures, and the continuous transition probability $p(x_1,x_2)$ is strictly positive. Then $p_\theta$ converges to $\pi$ in weak topology as $\mathcal{L}(\theta)\to 0$.
\end{theorem}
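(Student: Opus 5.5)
We will chain Lemma \ref{th:weak_convergence} and Lemma \ref{th:target_convergence}, and then pass to a marginal. Since the conclusion concerns a limit as $\mathcal{L}(\theta)\to0$, we argue along an arbitrary sequence $\{\theta_\ell\}$ with $\mathcal{L}(\theta_\ell)\to0$ and show $p_{\theta_\ell}\to\pi$ weakly.

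\emph{Step 1: from MMD to weak convergence of the joint difference.} Work on $X=\mathbb{R}^{2d}$ and set $\mu_\ell(x_1,x_2)=p_{\theta_\ell}(x_1)p(x_1,x_2)$ and $\nu_\ell=\mu_\ell\circ\tau^{-1}$, so that $\mathcal{L}(\theta_\ell)=\|\mu_\ell-\nu_\ell\|^2_{MMD}$. Lemma \ref{th:weak_convergence} compares a sequence with a fixed limit, so we cannot apply it directly to $\mu_\ell$ against $\nu_\ell$. We get around this with a subsequence argument.

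Tightness of $\{p_\theta\}$ makes $\{\mu_\ell\}$ tight. Indeed, given a compact $K$ with $p_\theta(K)\ge1-\varepsilon$, the uniform-in-$x_1$ tightness of $p(x_1,\cdot)$ over $x_1\in K$ gives a compact $K'$ with $\mu_\ell(K\times K')\ge1-2\varepsilon$. That uniformity follows from continuity of $p$ in $x_1$ together with Scheffé's lemma applied to densities. The same reasoning makes $\{\nu_\ell\}$ tight.

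By Prokhorov, every subsequence has a further subsequence along which $\mu_\ell\to\mu$ and $\nu_\ell\to\nu$ weakly. Applying Lemma \ref{th:weak_convergence} ((iv)$\Rightarrow$(i)) to each sequence gives $\|\mu_\ell-\mu\|_{MMD}\to0$ and $\|\nu_\ell-\nu\|_{MMD}\to0$. The triangle inequality then yields $\|\mu-\nu\|_{MMD}=0$. Because $k$ is integrally strictly positive definite, the MMD is a metric, so $\mu=\nu$. Hence $\mu_\ell-\nu_\ell\to0$ weakly along the sub-subsequence. Since every subsequence has such a further subsequence, the convergence holds along the full sequence, tested against any fixed $f\in\mathcal{C}_b$.

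\emph{Step 2: transfer to the target.} Lemma \ref{th:target_convergence} now applies directly and gives
\[
\int f(x_1,x_2)\bigl(p_{\theta_\ell}(x_1)\pi(x_2)-\pi(x_1)p_{\theta_\ell}(x_2)\bigr)\,dx_1dx_2\to0\quad\text{for all } f\in\mathcal{C}_b(\mathbb{R}^{2d}).
\]

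\emph{Step 3: marginalize.} Take $f(x_1,x_2)=h(x_1)$ with $h\in\mathcal{C}_b(\mathbb{R}^d)$. Since $\pi$ and $p_{\theta_\ell}$ are probability measures, the integral equals
\[
\int h\,dp_{\theta_\ell}\cdot 1-\int h\,d\pi\cdot 1=p_{\theta_\ell}(h)-\pi(h).
\]
This tends to $0$, which is exactly weak convergence $p_{\theta_\ell}\to\pi$. As the sequence was arbitrary, the theorem follows.

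\emph{Main obstacle.} The delicate point is Step 1. It requires justifying tightness of the joint laws from tightness of $\{p_\theta\}$ and the regularity of $p$. It also requires using Lemma \ref{th:weak_convergence} for a difference of two moving sequences rather than a sequence against a fixed limit, which is what forces the compactness argument. If the uniform tightness of $p(x_1,\cdot)$ over compacts is troublesome, the fallback is to assume $p(x_1,\cdot)$ depends continuously on $x_1$ in total variation, which is what the Scheffé argument delivers anyway.
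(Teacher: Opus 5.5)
Your proposal is correct and follows the paper's argument. Lemma~\ref{th:weak_convergence} gives $p_\theta(x_1)p(x_1,x_2)-p_\theta(x_2)p(x_2,x_1)\to0$ weakly, Lemma~\ref{th:target_convergence} transfers this to $p_\theta(x_1)\pi(x_2)-\pi(x_1)p_\theta(x_2)\to0$, and testing against $h(x_1)$ extracts the marginal $p_\theta-\pi$. The paper applies Lemma~\ref{th:weak_convergence} to a difference of two moving measures without comment; your tightness/Prokhorov subsequence argument in Step 1, with uniform tightness of $p(x_1,\cdot)$ on compacts via Scheff\'e, justifies that step, so your version is the more rigorous one.
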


By \cite[Lemma 8]{simon2023metrizing}, $\mathcal{H}_k\subset\mathcal{C}_0$ if and only if $k$ is bounded and $k(\cdot,x)\in\mathcal{C}_0$ for all $x\in X$. Then the condition $\mathcal{H}_k\subset\mathcal{C}_0$ is satisfied by many standard kernels: Gaussian, Laplacian, Matern and inverse multi-quadratic kernels.

\subsection{Approximation and training}
In this subsection, we make some discussions on the gradient of the loss function to demonstrate validity of our algorithm. Theoretically, the loss function is defined as
\begin{equation*}
    \mathcal{L}(\theta)=\iint_{\mathbb{R}^{2d}\times \mathbb{R}^{2d}}{k(\vec{x},\vec{y})(\mu_1(\vec{x})-\mu_2(\vec{x}))(\mu_1(\vec{y})-\mu_2(\vec{y}))d\vec{x}}d\vec{y},
\end{equation*}
where $\vec{x}:=(x_1,x_2)\in\mathbb{R}^{2d}$ and
\begin{equation*}
\mu_1(\vec{x}):=p_\theta(x_1)p(x_1,x_2),\qquad\mu_2(\vec{x}):=p_\theta(x_2)p(x_2,x_1).
\end{equation*}
Here both $\mu_1$ and $\mu_2$ depend implicitly on the parameter $\theta$; for brevity, this dependence is omitted in the notation below.

The first attempt is then to utilize the gradient of this loss to train the model. As direct evaluation of the continuous gradient is typically intractable, one may have two options. The first is to compute the continuous gradient first and then approximate using the Monte Carlo approach, while the second one is to approximate the loss using the Monte Carlo approach first and then take the gradient.

Here, we explain a little bit about the Monte Carlo approximation. In the generative framework, let $G_\theta:\mathbb{R}^d\to\mathbb{R}^d$ is a differentiable transport map from a base distribution $x\sim p_0$ to the target distribution $G_\theta(x)\sim p_\theta$. During training, we generate samples $\{X_i\}^N_{i=1}$ from $p_0$ via $X_i=G_\theta(Z_i)$ with $Z_i\sim p_0$. For each $X_i$, we then generate $Y_i$ by simulating one step of a Markov process with initial state $X_i$ and transition kernel $p(\cdot,\cdot)$. This yields paired samples $\{(X_i,Y_i)\}^N_{i=1}$. We then define the empirical distribution
\begin{equation*}
    \mu^N_1(\vec{x}):=\frac{1}{N}\sum^N_{i=1}{\delta_{(X_i,Y_i)}(\vec{x})},\qquad\mu^N_2(\vec{x}):=\frac{1}{N}\sum^N_{i=1}{\delta_{(Y_i,X_i)}(\vec{x})},
\end{equation*}
which approximate $\mu_1$ and $\mu_2$ respectively.

We see the difference: in the continuous setting, the dependence on $\theta$ is in the $p_{\theta}$, while the dependence is moved to the samples 
after the Monte Carlo approximation.

\subsubsection{Issues for the full gradient}

 Next, we discuss why using the full gradient of the loss could be troublesome for our use. To derive the continuous gradient, we assume that first that the state space is continuous and the generator $G_\theta$ satisfies the ordinary differential equation
\begin{equation}\label{eq:generatorode}
    \frac{d}{d\theta}G_\theta(x)=v(\theta,G_\theta(x)).
\end{equation}
Note that such velocity field $v$ exists in general if $G_{\theta}$ has certain smoothness. The density $p_\theta$ then evolves according to the transport equation
\begin{equation*}
    \partial_\theta p_\theta(x)=-\nabla\cdot(v(\theta,x)p_\theta(x)).
\end{equation*}
Differentiating the loss $\mathcal{L}(\theta)=\|\mu_1-\mu_2\|^2_{MMD}$ with respect to $\theta$ gives
\begin{equation*}
    \frac{d}{d\theta}\mathcal{L}(\theta)=2\int_{\mathbb{R}^{2d}}{\int_{\mathbb{R}^{2d}}{k(\vec{x},\vec{y})(\partial_\theta p_\theta(x_1)p(x_1,x_2)-\partial_\theta p_\theta(x_2)p(x_2,x_1))(\mu_1(\vec{y})-\mu_2(\vec{y}))}d\vec{x}}d\vec{y},
\end{equation*}
where $\vec{x}=(x_1,x_2)$ and $\vec{y}=(y_1,y_2)$. To obtain an expression suitable for Monte Carlo estimation, we perform integration by parts. Using the transport equation and assuming sufficient decay at infinity, we obtain
\begin{multline}\label{eq:fullgrad}
    \frac{d}{d\theta}\mathcal{L}(\theta)
    =2\int_{\mathbb{R}^{2d}}{\int_{\mathbb{R}^{2d}}{v(\theta,x_1)\cdot\nabla_{x_1}\left[\ln{k(\vec{x},\vec{y})}+\ln{p(x_1,x_2)}\right]k(\vec{x},\vec{y})\mu_1(\vec{x})(\mu_1(\vec{y})-\mu_2(\vec{y}))}d\vec{x}}d\vec{y}\\
    -2\int_{\mathbb{R}^{2d}}{\int_{\mathbb{R}^{2d}}{v(\theta,x_2)\cdot\nabla_{x_2}\left[\ln{k(\vec{x},\vec{y})}+\ln{p(x_2,x_1)}\right]k(\vec{x},\vec{y})\mu_2(\vec{x})(\mu_1(\vec{y})-\mu_2(\vec{y}))}d\vec{x}}d\vec{y}.
\end{multline}
Here $\nabla_{x_1}\ln{p(x_1,x_2)}$ denotes the gradient with respect to the first argument of the transition kernel $p(\cdot,\cdot)$; similarly, $\nabla_{x_2}\ln{p(x_2,x_1)}$ is the gradient with respect to the first argument evaluated at $(x_1,x_2)$.  When $p_\theta=\pi$ satisfies the detailed balance condition $\mu_1(\vec{y})-\mu_2(\vec{y})=0$ for all $\vec{y}$, the gradient vanishes. This confirms that the target distribution $\pi$ is a stationary point of the loss landscape, as expected.

Approximation for continuous gradient using the Monte Carlo approximation is feasible for continuous state problems: $v(\theta, x)$ is easy to approximate using $\partial_{\theta}G_{\theta}$, while $\nabla_{x_i}\ln{p(x_1,x_2)}$ can be computed in principle. The problem is that the formula could be complicated and different formulas would be used if there are discrete components. In fact, in the case that $x_i$ has discrete components which is our focus in this work, the ODE \eqref{eq:generatorode} should be replaced by some difference equation. Some analogue could be derived but one must discuss different cases, which could be much  involved. Another problem for this approach is that one cannot use the auto-differentiation using the backpropagation in Python conveniently.
To summarize, computing the continuous gradient first and then approximate could be doable but there is much difficulty involved.

Applying the Monte Carlo approximation first and then taking the gradient is preferred in practice as one could make use the auto-differentiation using the backpropagation in Python. However, this approach is not feasible for our problem. In fact, the loss using the samples is given by
\begin{align*}
    \mathcal{L}^N(\theta)
    &=\int_{\mathbb{R}^{2d}}{\int_{\mathbb{R}^{2d}}{k(\vec{x},\vec{y})(\mu^N_1(\vec{x})-\mu^N_2(\vec{x}))(\mu^N_1(\vec{y})-\mu^N_2(\vec{y}))}d\vec{x}}d\vec{y}\\
    &=\frac{1}{N^2}\sum^N_{i,j=1}{\left[k((X_i,Y_i),(X_j,Y_j))-k((X_i,Y_i),(Y_j,X_j))\right]}\\
    &\quad-\frac{1}{N^2}\sum^N_{i,j=1}{\left[k((Y_i,X_i),(X_j,Y_j))-k((Y_i,X_i),(Y_j,X_j))\right]}.
\end{align*}

Here, the $\theta$ dependence is moved to the samples, which is almost impossible to compute the gradient because there is sampling procedure involved like sampling from the proposal distribution and the acceptance-rejection procedure.


\subsubsection{Discussion on the surrogate gradient}

As mentioned, the surrogate gradient in \eqref{eq:grad_stop_kernel} is obtained by approximating the loss using Monte Carlo first and then detaching $\mathbf{s}'$ from the computation graph. In other words, the dependence of $\theta$ of $Y_i$ is ignored.
The reason has been explained above due to the infeasibility of taking $\theta$ derivative for $Y_i$. Training using such a gradient surrogate is certainly convenient, as one only needs to take the derivative of the generator $G_{\theta}$. For simplicity, we assume in this subsection that the kernel $k$ is symmetric: $k(\vec{x},\vec{y})=k(\vec{y},\vec{x})$, and the analysis can be extended to the non-symmetric case without essential difficulty.

With the Monte Carlo approximation, \eqref{eq:grad_stop_kernel} is given by
\begin{equation}
    \begin{aligned}
        \hat{g}_N(\theta)
        &=\frac{2}{N^2}\sum^N_{i,j=1}{\frac{d}{d\theta}G_\theta(Z_i)\cdot\left[\nabla_{x_1}k((X_i,Y_i),(X_j,Y_j))-\nabla_{x_1}k((X_i,Y_i),(Y_j,X_j))\right]}\\
        &\quad-\frac{2}{N^2}\sum^N_{i,j=1}{\frac{d}{d\theta}G_\theta(Z_i)\cdot\left[\nabla_{x_2}k((Y_i,X_i),(X_j,Y_j))-\nabla_{x_2}k((Y_i,X_i),(Y_j,X_j))\right]}.
    \end{aligned}
    \label{eq:surrogate_grad}
\end{equation}
Recall that by definition $X_i=G_\theta(Z_i)$ with $Z_i\sim p_0$.

As explained in Appendix \ref{app:Nlimitofsurrogate}, this is equal to approximating the loss using Monte Carlo first and then taking the surrogate gradient by the above strategy. The $N\to\infty$ limit of the discrete surrogate gradient above, or \eqref{eq:grad_stop_kernel}, is explicitly given by the following formula
\begin{equation}
\begin{split}
    \hat{g}_{\infty}(\theta)
    &=2\int_{\mathbb{R}^{2d}}{\int_{\mathbb{R}^{2d}}{v(\theta,x_1)\cdot\nabla_{x_1}[k(\vec{x},\vec{y})]\mu_1(\vec{x})(\mu_1(\vec{y})-\mu_2(\vec{y}))}d\vec{x}}d\vec{y}\\
    &\quad-2\int_{\mathbb{R}^{2d}}{\int_{\mathbb{R}^{2d}}{v(\theta,x_2)\cdot\nabla_{x_2}[k(\vec{x},\vec{y})]\mu_2(\vec{x})(\mu_1(\vec{y})-\mu_2(\vec{y}))}d\vec{x}}d\vec{y}.
\end{split}
\label{eq:inf_surrogate_grad}
\end{equation}
Compared to \eqref{eq:fullgrad}, one finds that 
this simplification has ignored $\nabla_{x_i}\ln{p(x_1,x_2)}$ in the formula. 
The following property still holds for this continuous surrogate gradient.
\begin{proposition}
The continuous surrogate gradient $\hat{g}_{\infty}=0$ when $p_{\theta}=\pi$.
\end{proposition}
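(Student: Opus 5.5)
The plan is to show that, at $p_\theta=\pi$, the factor $\mu_1(\vec{y})-\mu_2(\vec{y})$ vanishes identically. This factor appears inside both double integrals in \eqref{eq:inf_surrogate_grad}. Each integrand is a product of a function of $\vec{x}$ and this factor in $\vec{y}$, so each term should then be zero.

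First, substitute $p_\theta=\pi$ into the definitions. This gives $\mu_1(\vec{y})=\pi(y_1)p(y_1,y_2)$ and $\mu_2(\vec{y})=\pi(y_2)p(y_2,y_1)$. The kernel $p$ is assumed throughout Section~\ref{sec:method} to satisfy the detailed balance condition \eqref{eq:detailed_balance} with respect to $\pi$, equivalently Proposition~\ref{prop:reversibility}. Hence $\mu_1(\vec{y})=\mu_2(\vec{y})$ for every $\vec{y}\in\mathbb{R}^{2d}$. In the general measure-theoretic setting, this is the statement that $\mu_1=\mu_2$ as measures on $\mathbb{R}^{2d}$.

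Second, rewrite each term of \eqref{eq:inf_surrogate_grad} by Fubini's theorem. For the first term define
\begin{equation*}
F_1(\vec{y}):=\int_{\mathbb{R}^{2d}} v(\theta,x_1)\cdot\nabla_{x_1}k(\vec{x},\vec{y})\,\mu_1(\vec{x})\,d\vec{x}.
\end{equation*}
The first term then equals $2\int F_1(\vec{y})\,(\mu_1-\mu_2)(d\vec{y})$, which is zero once $\mu_1=\mu_2$. The second term is treated the same way with $F_2$, built from $v(\theta,x_2)$, $\nabla_{x_2}k$ and $\mu_2$. Summing, $\hat g_\infty(\theta)=0$.

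The only real obstacle is justifying Fubini, namely that $F_1$ and $F_2$ are well defined and integrable. I would assume, as is implicit in the paper's formal derivation, that $\nabla k$ is bounded (true for Gaussian, Laplacian-type smooth kernels) and that $v(\theta,\cdot)$ is integrable against $p_\theta$. Then $|F_i(\vec{y})|\le \|\nabla k\|_\infty \int |v|\,dp_\theta<\infty$ uniformly in $\vec{y}$, and integrating a bounded function against the zero signed measure gives zero. Alternatively, one can avoid Fubini entirely: the inner integral over $\vec{y}$, taken for fixed $\vec{x}$, is already $0$ pointwise, because the integrand contains the factor $\mu_1(\vec{y})-\mu_2(\vec{y})\equiv0$.
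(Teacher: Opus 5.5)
Your proposal is correct and matches the paper's argument. The paper skips the proof as obvious: at $p_\theta=\pi$, detailed balance gives $\mu_1=\mu_2$, so the factor $\mu_1(\vec{y})-\mu_2(\vec{y})$ makes both integrals in \eqref{eq:inf_surrogate_grad} vanish. Your integrability remark, or equivalently the observation that the integrand vanishes pointwise, only makes explicit what the paper leaves implicit.
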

\noindent This is obvious as $\mu_1=\mu_2$ in this case so we skip the proof. This property ensures that the equilibrium point of the new dynamics is the same as the original one.

A natural question arises: does omitting the term $\nabla_{x_1}\ln{p(x_1,x_2)}$ in the optimization of $\mathcal{L}(\theta)$ preserve a descent direction?
In the classical Metropolis-Hastings Markov chain, given a proposal kernel $q(x_1,x_2)$ from state $x_1$ to $x_2$ and a target stationary distribution $\pi(x)$, the transition probability is defined as
\begin{equation}\label{eq:transition_probability}
    p(x_1,x_2)=\min\left\{q(x_1,x_2),\frac{\pi(x_2)}{\pi(x_1)}q(x_2,x_1)\right\}.
\end{equation}
Differentiating the logarithm yields the piecewise expression
\begin{equation*}
    \nabla_{x_1}\ln{p(x_1,x_2)}=\left\{\begin{aligned}
        &\nabla_{x_1}q(x_1,x_2),&\frac{\pi(x_1)q(x_1,x_2)}{\pi(x_2)q(x_2,x_1)}\leq1,\\
        &\pi(x_2)\nabla_{x_1}\left(\frac{q(x_2,x_1)}{\pi(x_1)}\right),&\frac{\pi(x_2)q(x_2,x_1)}{\pi(x_1)q(x_1,x_2)}<1.
    \end{aligned}\right.
\end{equation*}
This expression simplifies considerably when the proposal kernel $q(\cdot,\cdot)$ is chosen to be symmetric. In practice, both the kernel $k(\cdot,\cdot)$ appearing in the MMD and the proposal $q(\cdot,\cdot)$ are taken to be symmetric. 
Under this symmetry, the discussion is reduced to that of $\nabla_{x_1}$ only ($\nabla_{x_2}$ is similar). 

For the surrogate gradient to be a good descent direction, it is desired that
\begin{equation*}
    \left|\nabla_{x_1}q(x_1,x_2)\right|\ll\left|\nabla_{x_1}k(\vec{x},\vec{y})\right|,\qquad\left|\nabla_{x_1}\left(\frac{q(x_2,x_1)}{\pi(x_1)}\right)\right|\ll\left|\frac{\nabla_{x_1}k(\vec{x},\vec{y})}{\pi(x_1)}\right|.
\end{equation*}
Of course, this inequality cannot be proved as the right hand sides of both inequalities contain $\vec{y}$. However, typically, if the proposal gives a sample $x_2$ close to $x_1$, we may expect $\nabla_{x_1}q(x_1, x_2)$ to be small while the right hand side could be large if the typical $\vec{y}$ is moderately away from $\vec{x}$. This intuition suggests that the surrogate gradient could be a good approximation, and could yield a valid descent direction.

Of course, even though it is not strict descent direction, if the dynamics could eventually converge to its stationary state, it is still good as the stationary point is the same. As another remark, even if we cannot guarantee that the convergence to the desired stationary solution. The solution found using the surrogate gradient could be a good initial guess for another optimization method that can ensure in theory the convergence to the minimizer (like the gradient-free methods). Nevertheless, as we find the numerical simulation below, using the surrogate gradient itself is already enough for training the models in all our examples.
A rigorous analysis of the resulting optimization dynamics is nontrivial and is left for future work.

%% file: section/numerical_experiments.tex
\section{Numerical Experiments}
\label{sec:experiments}

\subsection{Continuous Validation: 2D Gaussian Mixture}
\label{subsec:continuous_exp}

To validate the effectiveness of our framework on continuous multimodal distributions, we first apply it to a two-dimensional Gaussian mixture model (GMM). This strictly continuous setting serves as a rigorous sanity check to confirm that our reversibility-based objective can correctly recover complex energy landscapes and transition across low-density regions before moving to discrete domains.

\paragraph{Experimental Setup.}
The target distribution $\pi(\mathbf{x})$ is defined as a mixture of two Gaussian components:
\begin{equation}
    \pi(\mathbf{x}) = \pi_1 \mathcal{N}(\mathbf{x} \mid \boldsymbol{\mu}_1, \boldsymbol{\Sigma}_1) + \pi_2 \mathcal{N}(\mathbf{x} \mid \boldsymbol{\mu}_2, \boldsymbol{\Sigma}_2),
\end{equation}
with unequal mixing coefficients $\pi_1 = 0.6$ and $\pi_2 = 0.4$, located at $\boldsymbol{\mu}_1 = [1, 1]^\top$ and $\boldsymbol{\mu}_2 = [-1, -1]^\top$, respectively. The covariance matrices are set to introduce distinct correlations: $\boldsymbol{\Sigma}_1$ has positive off-diagonal elements ($0.2$), whereas $\boldsymbol{\Sigma}_2$ has negative ones ($-0.2$), alongside a shared variance of $0.5$.

For the generator, we employ a RealNVP architecture \cite{dinh2016density} consisting of 8 affine coupling layers, where the scale and translation functions are parameterized by 2-layer MLPs with 64 hidden units and LeakyReLU activations. Crucially, while our proposed MMD training objective is strictly Jacobian-free, utilizing an invertible flow here allows us to compute the exact log-likelihoods required for rigorous downstream quantitative evaluation (e.g., KL divergence).

To ensure robust gradient flow across the disconnected modes and prevent mode collapse, the MMD objective utilizes a multi-scale kernel that combines Gaussian RBFs (bandwidths $\sigma \in \{0.1, 0.5, 1.0, 2.0, 5.0\}$) and an Inverse Multiquadric (IMQ) kernel (with parameters $\beta=0.5$ and $c_{\text{imq}}=1.4$). This combination provides the heavy-tailed gradients crucial for learning long-range dependencies \cite{binkowski2018demystifying}. 

Additionally, to confine generated samples within a valid domain and ensure stable early exploration, we add a soft boundary penalty to the overall loss \cite{cai2024weak}:
\begin{equation}
    L_b = \frac{1}{N} \sum_{i=1}^N \text{Sigmoid}\Big( c\cdot\big(\|G_{\theta}(\mathbf{z}_{i})-x_0\|_{2}^{2} - r^{2}\big)\Big).
    \label{eq:boundary_penalty}
\end{equation}

\paragraph{MCMC Transition Kernel.}
To construct the forward and backward joint distributions required by the reversibility-based MMD objective, we define a continuous Metropolis-Hastings transition kernel $p(\mathbf{x},\mathbf{x}')$. Specifically, we employ an isotropic Gaussian random walk proposal, $q(\mathbf{x},\mathbf{x}') = \mathcal{N}(\mathbf{x}' \,|\, \mathbf{x}, \sigma_{\text{prop}}^2 \mathbf{I})$, where the step size is set to $\sigma_{\text{prop}} = [0.1]$. During training, we apply $m = [3]$ sequential MCMC steps to the generated samples to evaluate the reversibility violation. This simple yet effective local proposal provides sufficient exploration in the continuous landscape while maintaining detailed balance.

The model is optimized using the AdamW optimizer with a batch size of $2048$. We set an initial learning rate of $10^{-4}$ which is decayed by a factor of $0.71$ at $20{,}000$, $50{,}000$  and $100{,}000$ iterations to ensure stable convergence.

\paragraph{Results.}
Figure~\ref{fig:mix_gaussian} presents the qualitative sampling results. 
As shown in the \textbf{top row}, the trained generator successfully recovers the asymmetric bimodal structure of the target density, accurately capturing the relative probability mass of the two modes ($0.6$ vs. $0.4$) without suffering from mode dropping.
This alignment is further corroborated by the empirical marginal distributions shown in the \textbf{bottom row}, where the generated profiles closely match the ground truth.

To quantitatively assess convergence speed and stability, Figure~\ref{fig:quant_results} plots the training objective (MMD loss) and validation metrics ($L_2$ error and KL divergence) against both training iterations and wall-clock time.
As shown in Figure~\ref{fig:quant_results}(a), the MMD loss decreases rapidly within the initial phase, followed by a steady decline in both $L_2$ error and KL divergence. This confirms that penalizing the reversibility violation effectively drives the generator toward the global equilibrium.
Furthermore, Figure~\ref{fig:quant_results}(b) demonstrates that high-fidelity sampling is achieved with minimal computational overhead. Ultimately, the model reaches an exceptionally low $L_2$ density error of $0.0483$, robustly validating the accuracy of the proposed framework.

\begin{figure}[htbp]
    \centering
    \includegraphics[width=0.8\linewidth]{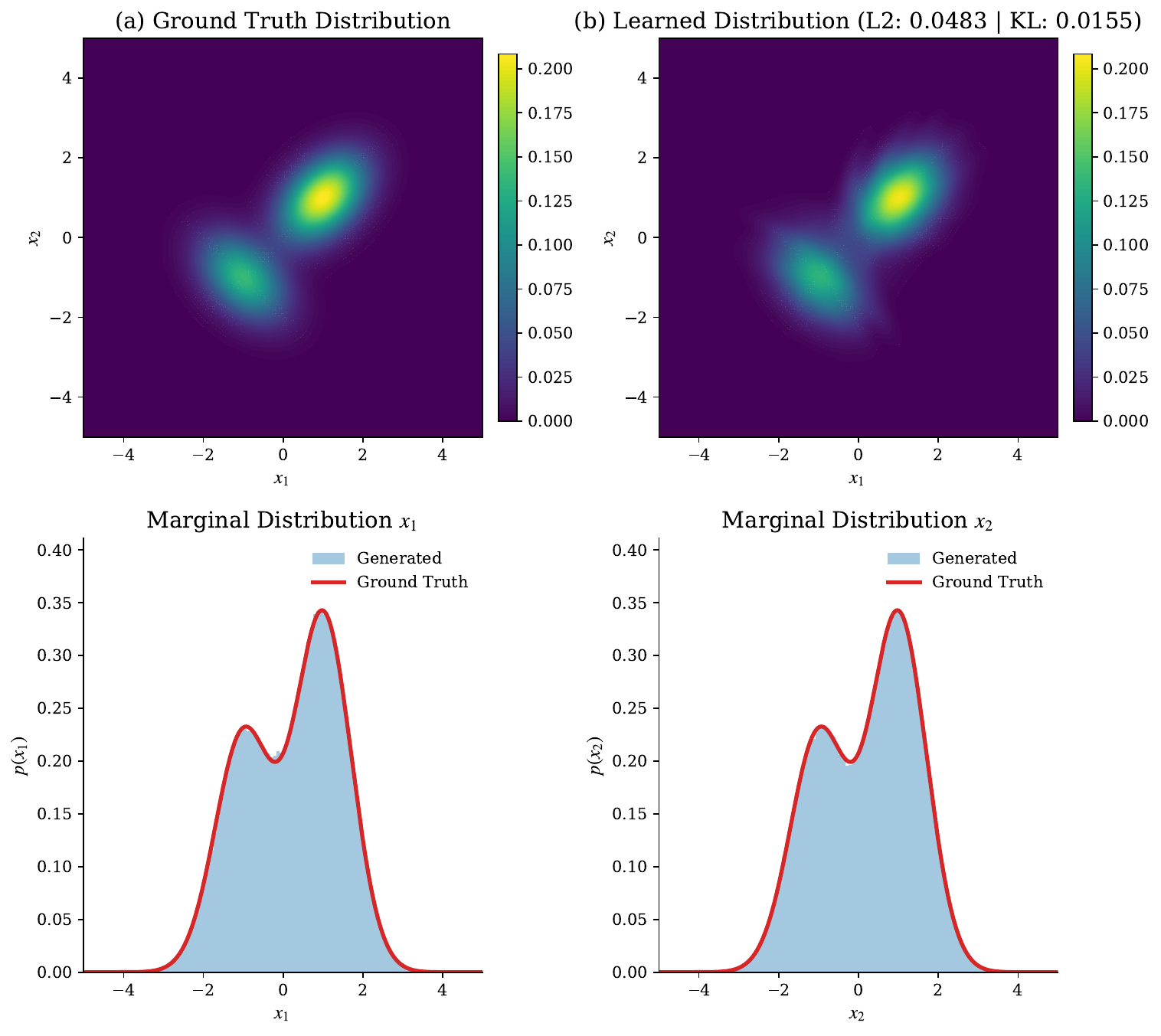} 
    \caption{Qualitative results on the 2D Gaussian mixture. \textbf{Top-left:} Ground truth target density. \textbf{Top-right:} Learned density by our model. \textbf{Bottom row:} Marginal distributions along the first (left) and second (right) dimensions. The learned density accurately captures both modes and their relative weights.}
    \label{fig:mix_gaussian}
\end{figure}

\begin{figure}[htbp]
    \centering
    \includegraphics[width=0.8\linewidth]{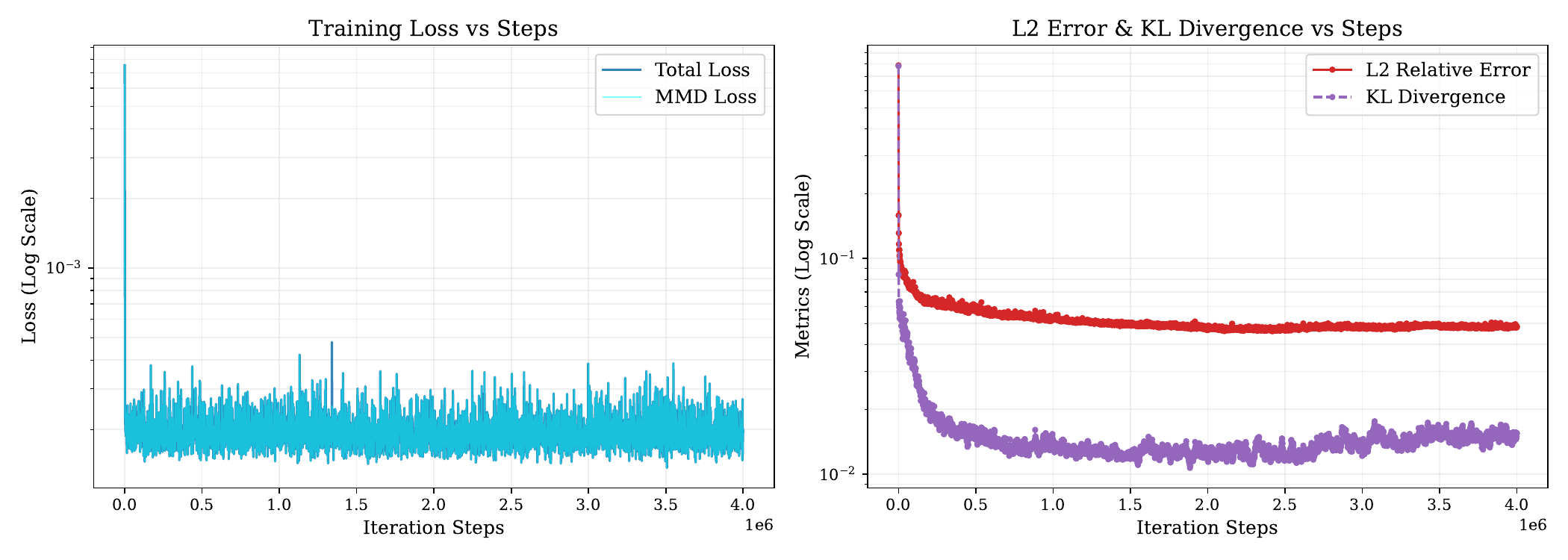}
    \caption{Training curves}
    \label{fig:quant_results}
\end{figure}

\subsection{Hybrid System: Balanced Double Well Potential}
\label{subsec:hybrid_exp}

To demonstrate the versatility of our framework across mixed continuous-discrete domains, we evaluate it on a hybrid system where the state is denoted by $\mathbf{s} = (x, k)$. Here, $x \in \mathbb{R}$ is a continuous coordinate and $k \in \{0, 1, \dots, M-1\}$ is a discrete mode index.

\paragraph{System Definition.}
The energy landscape is defined by a conditional quartic potential, where the discrete mode $k \in \{0, 1, 2\}$ determines the geometric properties of the potential well acting on $x$:
\begin{equation}
    E(x, k) = (x^2 - \mu)^2.
\end{equation}
To strictly assess the generator's ability to cross massive energy barriers, we set the well positions to extreme values ${\mu} = [1.0, 9.0, 25.0]$, creating a highly heterogeneous mixture of spatial extents. The target joint probability is explicitly equalized across discrete modes, $p(x, k) = \frac{1}{M} e^{-E(x, k)} / Z_k$, requiring the generator to overcome the severe energy barriers between distinct $k$-modes while correctly sampling the local fluctuations of $x$.

\paragraph{Split-Head Generator Architecture.}
To capture the strict statistical coupling between $x$ and $k$, we propose a \textbf{split-head} architecture. Instead of independent generators, a shared 3-layer MLP backbone (with 128 hidden units per layer) first maps standard Gaussian noise $\mathbf{z} \in \mathbb{R}^{32}$ to a common latent representation $\mathbf{h}$. This representation is then bifurcated into continuous and discrete heads:
\begin{equation}
    x = \mathbf{W}_x \mathbf{h} + \mathbf{b}_x, \quad \text{and} \quad k = \text{Sample}(\text{Softmax}(\mathbf{W}_k \mathbf{h} + \mathbf{b}_k)).
\end{equation}
Consistent with our approach for strictly discrete systems, the discrete head utilizes the Straight-Through Estimator (STE) \cite{bengio2013estimating} solely during the backward pass to enable end-to-end differentiability. This shared design ensures that the generated continuous coordinate $x$ is physically consistent with the sampled mode $k$.

\paragraph{Product Kernel and Hybrid MCMC Transition.}
Standard kernels are ill-defined for mixed spaces. To compute the MMD effectively, we define a Product Kernel that factorizes the similarity:
\begin{equation}
    k_{\text{MMD}}\big((x, k), (x', k')\big) = k_{\text{RBF}}(x, x') \cdot \mathbb{I}(k = k'),
    \label{eq:product_kernel}
\end{equation}
where $\mathbb{I}$ is the indicator function. This formulation dictates that two samples are considered proximal only if they exactly match in the discrete dimension and are close in the continuous space. 

To construct the joint distributions for the MMD objective, we employ a two-phase hybrid Metropolis-Hastings kernel $p\big((x, k) ,(x', k')\big)$ to efficiently navigate the heterogeneous phase space:
\begin{enumerate}
    \item \textbf{Intra-mode Exploration (Fixed $k$):} We use a mixture proposal combining a standard Gaussian random walk $x' = x + \epsilon$ (where $\epsilon \sim \mathcal{N}(0, 0.5^2)$) with probability $0.9$, and a symmetry-breaking flip $x' = -x + \epsilon$ with probability $0.1$ to efficiently cross the local barrier at $x=0$.
    \item \textbf{Cross-mode Teleportation (Joint Update):} To ensure ergodicity across isolated wells, we uniformly propose a transition to a different mode $k' \neq k$. To accommodate the varying spatial extents, the continuous coordinate is deterministically mapped via $x' = x \sqrt{\mu_{k'} / \mu_k}$. To rigorously maintain detailed balance under this spatial stretching, the MH acceptance probability is simply adjusted by the scaling factor $\sqrt{\mu_{k'}/\mu_k}$.
\end{enumerate}

\paragraph{Optimization details.}
The MMD objective is evaluated using joint distributions generated via $3$ sequential steps of this exact hybrid kernel. The network is optimized using the AdamW optimizer with a batch size of $2048$. To ensure stable convergence across the extreme energy landscape, we apply gradient clipping with a maximum norm of $1.0$. The generator is trained for a total of $100{,}000$ iterations, employing a cosine annealing learning rate schedule that decays from an initial value of $5 \times 10^{-4}$ down to $10^{-6}$.

\begin{figure}[H]
    \centering
    \begin{subfigure}[b]{0.64\linewidth}  
        \centering
        \includegraphics[width=1.0\linewidth]{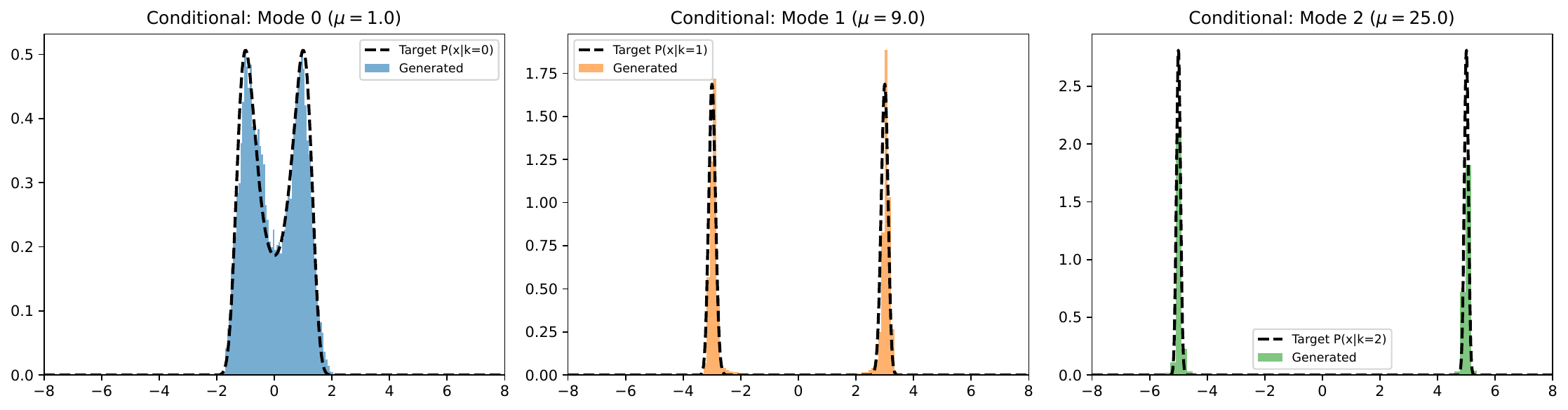}
        \caption{Conditional distribution}
        \label{fig:cond_dist}
    \end{subfigure}
    \begin{subfigure}[b]{0.35\linewidth}
        \centering
        \includegraphics[width=0.65\linewidth]{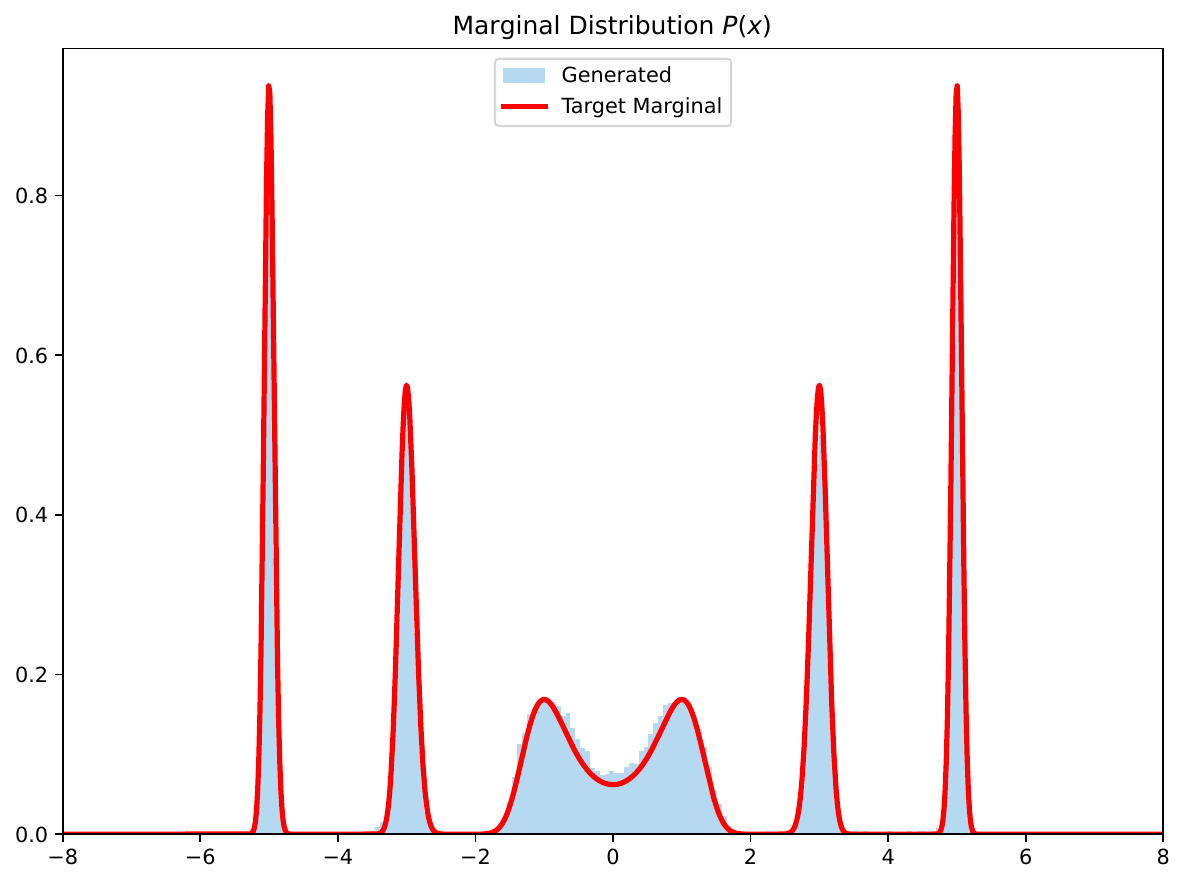}
        \caption{Marginal distribution}
        \label{fig:marginal_dist}
    \end{subfigure}
    \caption{Visual evaluation of the generative fidelity on the balanced double-well system. \textbf{(a)} Conditional distributions of the continuous coordinate $x$ given the discrete mode $k$. The generated samples perfectly align with the theoretical probability density functions across all three modes ($\mu \in \{1.0, 9.0, 25.0\}$), demonstrating the model's precise capture of local well geometries and thermal fluctuations. \textbf{(b)} The marginal distribution of $x$ aggregated over all modes. The symmetric and globally matched profile confirms the absence of mode collapse and validates the balanced exploration of the phase space.}
    \label{fig:combined_distribution}
\end{figure}

\begin{table}[H]
    \centering
    \caption{Comprehensive quantitative evaluation on the Balanced Double-Well system ($\mu \in \{1.0, 9.0, 25.0\}$). We report errors across discrete mode selection, conditional continuous moments, and global distribution distances.}
    \label{tab:balanced_double_well_ultimate}
    \begin{tabular}{@{}lc@{}}
        \toprule
        \textbf{Metric} & \textbf{Generated (Ours)} \\ 
        \midrule
        \multicolumn{2}{l}{\textit{Discrete Mode Selection}} \\
        \quad Mode L1 Error $\sum |P_{gen}(k) - P_{true}(k)|$   & $0.0382$ \\
        \midrule
        \multicolumn{2}{l}{\textit{Continuous Fidelity (Conditional on modes)}} \\
        \quad Mean Conditional $W_1$                            & $0.0399$ \\
        \midrule
        \multicolumn{2}{l}{\textit{Global \& Joint Distribution}} \\
        \quad Marginal $W_1$ w.r.t $x$                          & $0.0886$ \\
        \quad Joint MMD                       & $3.9324\times 10^{-3}$ \\
        \bottomrule
    \end{tabular}
\end{table}

\subsection{Discrete System: 2D Ising Model}
\label{subsec:ising_exp}

To demonstrate the capability of our framework in strictly discrete domains where Jacobian-based methods are inapplicable, we investigate the 2D Ising model \cite{ising1925beitrag, onsager1944crystal}. This system serves as a standard benchmark for discrete generative modeling due to its complex combinatorial state space and well-characterized phase transition behavior.

\paragraph{Physical Setup.}
We consider a square lattice $\Lambda$ of size $L \times L$ with periodic boundary conditions. The system state is defined by discrete spins $\mathbf{s} \in \{-1, +1\}^{N}$, where $N=L^2$. The Hamiltonian is given by:
\begin{equation}
    E(\mathbf{s}) = -J \sum_{\langle i, j \rangle} s_i s_j - h \sum_{i \in \Lambda} s_i,
    \label{eq:ising_hamiltonian}
\end{equation}
where $\langle i, j \rangle$ denotes nearest-neighbor pairs, $J>0$ is the ferromagnetic coupling constant, and $h$ is the external magnetic field. In our experiments, we set $J=1$ and strictly assume zero external field ($h=0$) to preserve the fundamental $\mathbb{Z}_2$ spin-flip symmetry, varying only the inverse temperature $\beta$. 
We deliberately focus on a small-scale lattice ($L=3$, $N=9$) where the exact partition function $Z = \sum_\mathbf{s} e^{-\beta E(\mathbf{s})}$ can be fully enumerated. This allows us to compute the ground truth probabilities analytically, providing a rigorous and absolute baseline for evaluating generative accuracy.

\paragraph{Generator with Straight-Through Estimator.}
The intrinsically discrete nature of spins fundamentally breaks the continuous computation graph, posing a formidable challenge for gradient-based optimization. To address this, we parameterize the generator using a 3-layer Multi-Layer Perceptron (MLP) with 256 hidden units and LeakyReLU activations. The MLP maps a standard isotropic Gaussian noise vector $\mathbf{z} \sim \mathcal{N}(\mathbf{0}, \mathbf{I})$ ($d=32$) to real-valued unnormalized logits. To enable end-to-end differentiability while strictly enforcing the discrete domain, we utilize the Straight-Through Estimator (STE) \cite{bengio2013estimating}. 
Specifically, the forward pass applies the $\text{sign}(\cdot)$ function to cast the logits into physical states $\mathbf{s} \in \{-1, +1\}^N$, while the backward pass approximates the gradients using the derivative of the $\tanh(\cdot)$ function. 
Crucially, because our reversibility-based MMD objective evaluates samples directly through energy differences and does not require differentiating the target Boltzmann density, the STE is exclusively needed to route gradients back to the generator's internal parameters. This avoids the severe training instabilities often encountered in score-based or reinforcement learning approaches on discrete spaces.

\paragraph{MCMC Transition Kernel.}
To compute the reversibility-based MMD objective, we must define a valid physical transition kernel $p(\mathbf{s},\mathbf{s}')$ to generate the forward and backward joint pairs. We utilize a Metropolis-Hastings sampler equipped with symmetric proposal distributions, and empirically validate two effective proposal strategies: (1) a \textit{mixture proposal} that performs a random single-spin flip with high probability and a global all-spin flip with low probability, which is particularly advantageous for efficiently traversing the global $\mathbb{Z}_2$ symmetry barriers in the low-temperature regime; and (2) a \textit{multi-spin proposal} that uniformly samples an integer $n \in [1, N]$ and simultaneously flips $n$ randomly chosen spins. Both proposal strategies inherently satisfy detailed balance and provide sufficient ergodicity to reliably guide the generator toward the target Boltzmann distribution.

\paragraph{Optimization and Results.}
The model is trained using the AdamW optimizer with a batch size of $2048$. Given the varying thermodynamic difficulty across different temperature regimes, we employ an initial learning rate of $10^{-3}$ and apply a step-decay schedule, where the total number of iterations and the specific decay milestones are adaptively tuned based on the inverse temperature $\beta$.
We comprehensively evaluate the model across two distinct thermodynamic regimes: the high-temperature disordered phase ($\beta=0.2$) and the low-temperature ordered phase ($\beta=0.5$). For each regime, we generate $200{,}000$ independent samples for rigorous quantitative assessment. The generative fidelity is visually and quantitatively compared against the exact Boltzmann distribution in Figures~\ref{fig:ising_beta02} and \ref{fig:ising_beta05}, and Table~\ref{tab:ising_merged_results}.

\begin{itemize}
    \item \textbf{High-Temperature Regime ($\beta=0.2$):} As shown in Figure~\ref{fig:ising_beta02}, strong thermal fluctuations dominate the system. The magnetization distribution (top-left) exhibits a unimodal profile centered at zero, representing a purely disordered state. The generator accurately captures this behavior, perfectly matching the ground truth energy spectrum (top-right) and correctly assigning precise probabilities to the Top-100 configurations (bottom).
    
    \item \textbf{Low-Temperature Regime ($\beta=0.5$):} As $\beta$ increases beyond the critical point, the system is driven into an ordered phase characterized by strong spatial correlations. Figure~\ref{fig:ising_beta05} demonstrates the generator's successful adaptation to this highly coupled structure. The magnetization distribution (top-left) significantly broadens towards $\pm 1$, indicating the onset of $\mathbb{Z}_2$ symmetry breaking, while the energy distribution (top-right) correctly shifts towards lower-energy configurations.
\end{itemize}

Quantitatively, as reported in Table~\ref{tab:ising_merged_results}, the model achieves relative errors of less than $1.5\%$ across critical thermodynamic observables (Energy, Specific Heat, and Susceptibility) in both phases. Furthermore, the extremely low Total Variation (TV) errors confirm that the generator accurately learns the statistical weights of the exponentially large discrete state space without suffering from mode collapse.

\begin{figure}[H]
    \centering
    \includegraphics[width=0.8\linewidth]{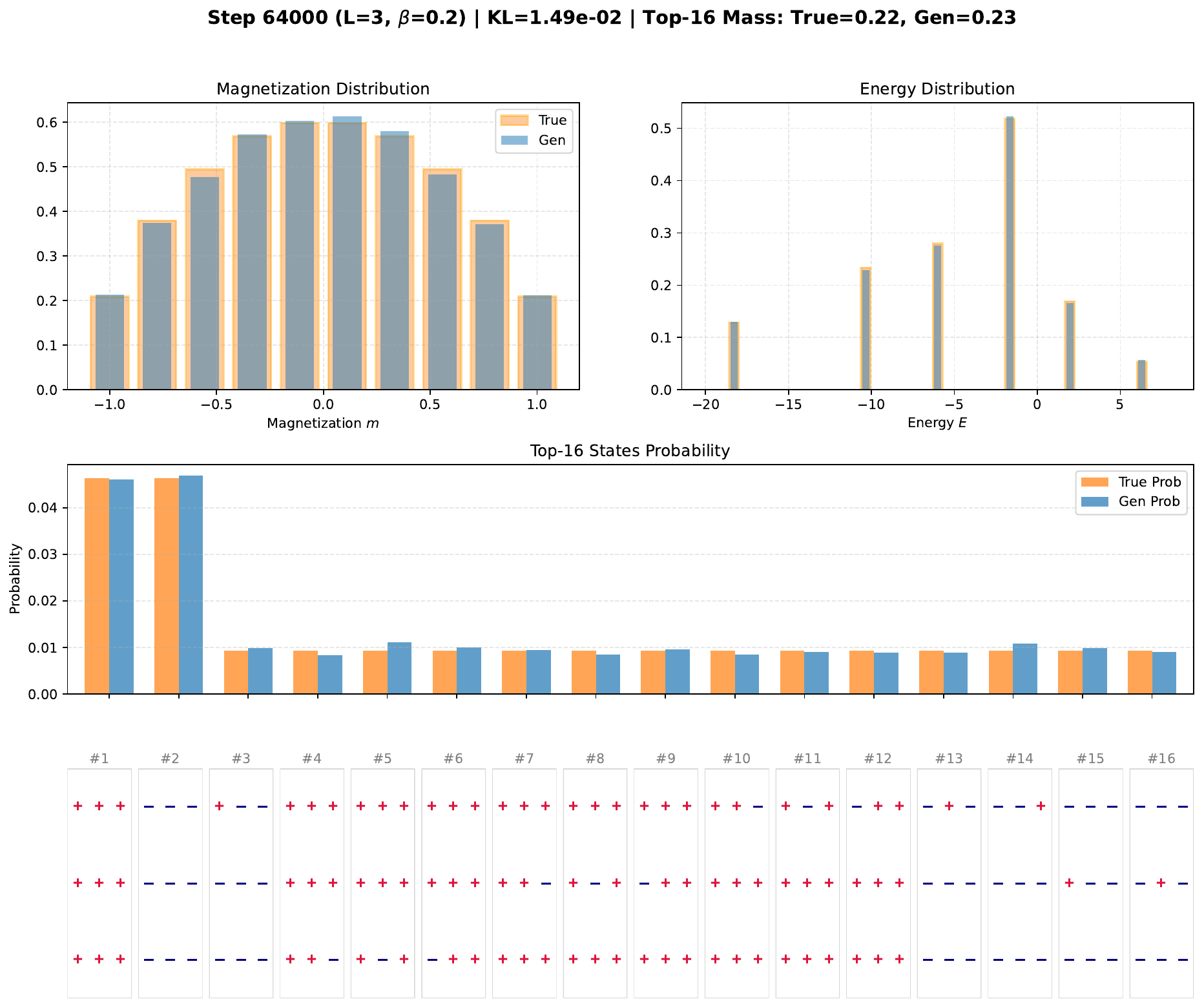}
    \caption{Generative fidelity for the 2D Ising Model in the high-temperature phase ($L=3, \beta=0.2$). \textbf{(Top-left)} Magnetization distribution showing a disordered state centered at zero. \textbf{(Top-right)} Energy distribution matching the theoretical spectrum. \textbf{(Bottom)} Probabilities of the Top-100 most likely configurations, demonstrating exact alignment with analytical enumeration.}
    \label{fig:ising_beta02}
\end{figure}

\begin{figure}[H]
    \centering
    \includegraphics[width=0.8\linewidth]{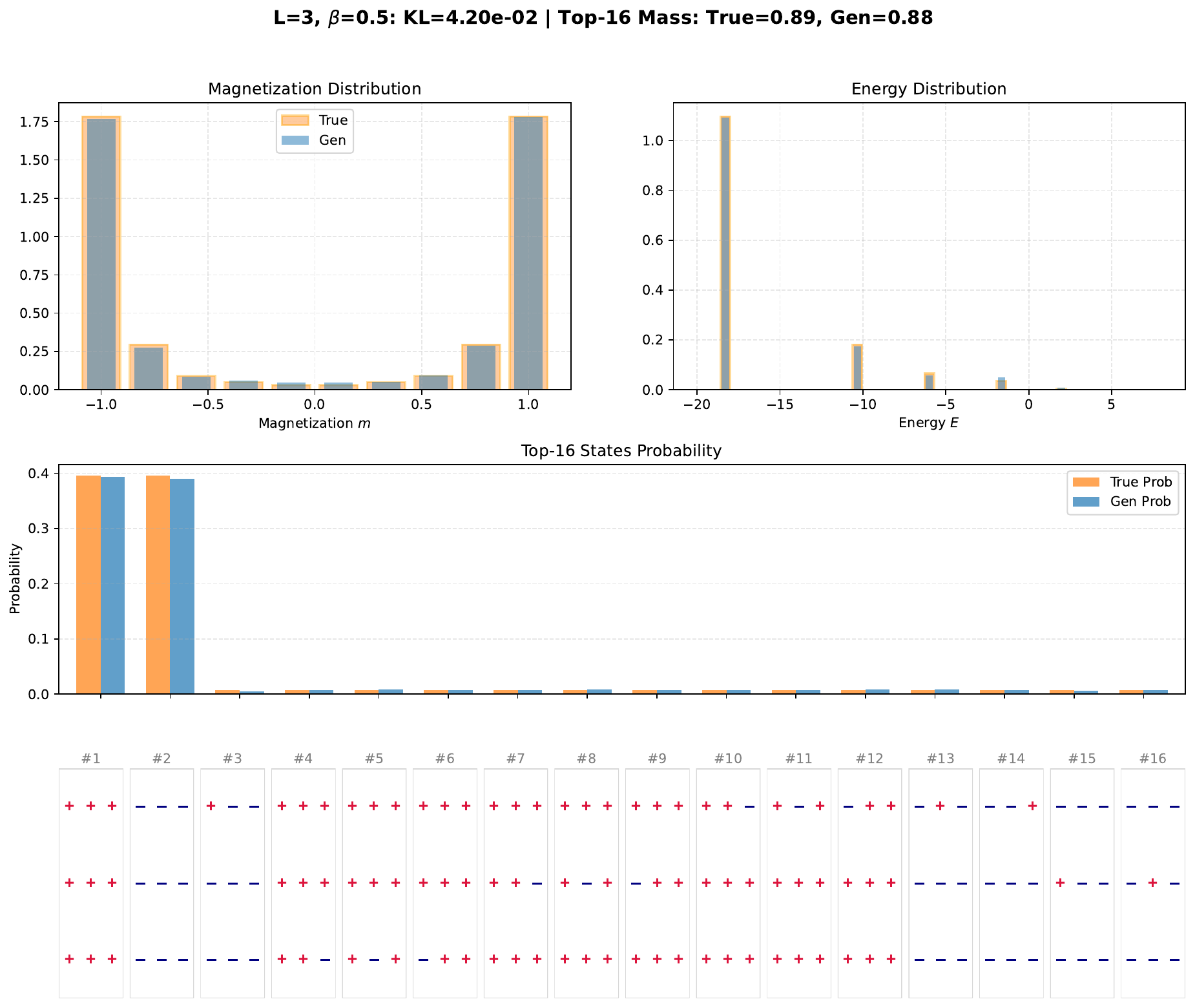}
    \caption{Generative fidelity for the 2D Ising Model in the low-temperature phase ($L=3, \beta=0.5$). \textbf{(Top-left)} Magnetization distribution capturing the broadened profile indicative of ordered states. \textbf{(Top-right)} Energy distribution properly shifted to lower states. \textbf{(Bottom)} Top-100 configuration probabilities, maintaining alignment despite stronger spin correlations.}
    \label{fig:ising_beta05}
\end{figure}

\begin{table}[H]
    \centering
    \caption{Quantitative analysis of generative performance on the 2D Ising model across different thermodynamic phases. Metrics are computed over 200,000 generated samples and compared against exact analytical values.}
    \label{tab:ising_merged_results}
    \begin{tabular}{@{}lccc@{}}
        \toprule
        \textbf{Metric} & \textbf{Ground Truth} & \textbf{Generated (Ours)} & \textbf{Error / Diff.} \\ 
        \midrule
        \multicolumn{4}{c}{\textit{High-Temperature Disordered Phase ($\beta=0.2$)}} \\
        \midrule
        \quad Energy $\langle E \rangle$                & $-4.8429$ & $-4.8211$ & $0.45\%$ (Rel.) \\
        \quad Abs Magnetization $\langle |m| \rangle$   & $0.4600$  & $0.4572$  & $0.0028$ (Abs.) \\
        \quad Specific Heat $C_v$                       & $1.3672$  & $1.3748$  & $0.56\%$ (Rel.) \\
        \quad Susceptibility $\chi$                     & $0.1486$  & $0.1493$  & $0.47\%$ (Rel.) \\
        \quad Total Variation (TV) Error                & $0.0000$  & $0.0331$  & $0.0331$ (Abs.) \\
        \midrule
        \multicolumn{4}{c}{\textit{Low-Temperature Ordered Phase ($\beta=0.5$)}} \\
        \midrule
        \quad Energy $\langle E \rangle$                & $-15.9091$ & $-15.7479$ & $1.01\%$ (Rel.) \\
        \quad Abs Magnetization $\langle |m| \rangle$   & $0.926$  & $0.919$  & $0.007$ (Abs.) \\
        \quad Specific Heat $C_v$                       & $4.677$  & $5.3921$  & $15.29\%$ (Rel.) \\
        \quad Susceptibility $\chi$                     & $0.1334$  & $0.1571$  & $17.70\%$ (Rel.) \\
        \quad Total Variation (TV) Error                & $0.0000$  & $0.0305$  & $0.0305$ (Abs.) \\
        \bottomrule
    \end{tabular}
\end{table}

%% file: section/conclusion.tex
\section{Conclusion}
\label{sec:conclusion}
In this work, we introduced a novel gradient-free generative framework capable of modeling complex statistical systems with mixed continuous and discrete state spaces. By formulating a reversible Maximum Mean Discrepancy (MMD) objective, our approach fundamentally bypasses the necessity for differentiable target densities and continuous computational graphs, a severe limitation that has long plagued Jacobian-based normalizing flows and score-based diffusion models in discrete domains. 

To achieve exact statistical coupling between disparate variable types, we proposed a split-head architecture alongside a bespoke Product Kernel that jointly measures similarities across domains. Our empirical evaluations demonstrate exceptional generative fidelity across multiple challenging thermodynamic regimes. On the purely discrete 2D Ising model, the framework accurately captures critical phase transition behaviors and thermal fluctuations without suffering from mode collapse. Furthermore, on the hybrid balanced double-well system, the model achieves near-perfect analytical alignment, seamlessly traversing high energy barriers to maintain equalized mode selection while flawlessly reconstructing the conditional continuous geometries.

Looking forward, this methodology opens highly promising avenues for generative modeling in the physical sciences. The inherent ability to jointly optimize discrete choices and continuous coordinates makes it uniquely suited for large-scale inverse problems, such as discovering metastable molecular conformations, designing combinatorial alloy structures, and exploring hybrid optimization landscapes where traditional gradient-based sampling catastrophically fails. We anticipate that integrating our reversible MMD framework with advanced Markov Chain Monte Carlo (MCMC) kernels will further accelerate its scalability towards high-dimensional, real-world physical simulations.